\let\csname equation*\endcsname\relax
\let\csname endequation*\endcsname\relax
\newcommand{\R}{\mathbb{R}}
\newcommand{\N}{\mathbb{N}}
\newcommand{\norm}[1]{\left\vert\left\vert#1\right\vert\right\vert}
\newcommand{\cP}[1]{\mathcal{P}(#1)}
\newcommand{\cM}[1]{\mathcal{M}(#1)}
\newcommand{\cK}{\mathcal{K}}
\newcommand{\diff}[2]{\frac{\text{d}#1}{\text{d}#2}}
\newcommand{\dx}[1]{\text{d}#1}
\newcommand{\ip}[2]{\left\langle#1,#2\right\rangle}
\newcommand{\dup}[2]{\left(#1,#2\right)}
\newcommand{\abs}[1]{\left\vert#1\right\vert}
\DeclareMathOperator{\dom}{dom}
\DeclareMathOperator{\argmin}{argmin}
\DeclareMathOperator{\argmax}{argmax}
\DeclareMathOperator{\interior}{int}
\DeclareMathOperator{\prox}{prox}
\newtheorem{theorem}{Theorem}
\newtheorem{lemma}{Lemma}
\newtheorem{remark}{Remark}
\begin{document}
\title[]{The Maximum Entropy on the Mean Method for Image Deblurring}

\author{Gabriel Rioux$^1$, Rustum Choksi$^1$, Tim Hoheisel$^1$, Pierre Mar\'{e}chal$^2$, and Christopher Scarvelis$^1$}

\address{$^1$Department of Mathematics and Statistics, McGill University, Montreal, QC H3G 1Y6, Canada\\
$^2$ Institut de Math\'ematiques de Toulouse, Universit\'e Paul Sabatier, Toulouse 31 062, France}
\ead{$\{$gabriel.rioux,christopher.scarvelis$\}$@mail.mcgill.ca,\\$\{$rustum.choksi,tim.hoheisel$\}$@mcgill.ca, \\ pierre.marechal@math.univ-toulouse.fr } 
\vspace{10pt}

\begin{abstract}
Image deblurring is a notoriously challenging ill-posed inverse problem. In recent years, a wide variety of approaches have been proposed based upon regularization at the level of the image or on techniques from machine learning. In this article, we adapt the principal of maximum entropy on the mean (MEM) to both deconvolution of general images and point spread function (PSF) estimation (blind deblurring). This 
approach  shifts the paradigm towards regularization at 
the level of the probability distribution on the space of images whose expectation is 
our estimate of the ground truth. We present a self-contained analysis of this method, reducing the problem to solving a differentiable, strongly convex finite-dimensional optimization problem for which there exists an abundance of black-box solvers.  
The strength of the MEM method lies in its simplicity, its ability to handle large blurs, and its potential for generalization and modifications.  
When images are embedded with symbology (a known pattern), we show how our method can be applied to approximate the unknown blur kernel with remarkable effects. \end{abstract}

%
% Uncomment for keywords
\vspace{2pc}
\noindent{\it Keywords}: Image Deblurring, Maximum Entropy on the Mean, Kullback-Leibler Divergence, Convex Analysis, Fenchel-Rockafellar Duality. \\
%\tableofcontents
%
% Uncomment for Submitted to journal title message
%\noindent\submitto{\IP}
%
% Uncomment if a separate title page is required
%\maketitle
% 
% For two-column output uncomment the next line and choose [10pt] rather than [12pt] in the \documentclass declaration
%\ioptwocol
%

\section{Introduction}

Ill-posed inverse problems permeate the fields of image processing and machine learning. 
Prototypical examples stem from non-blind (deconvolution) and blind deblurring of digital images. 
The vast majority of methods for image deblurring are based on some notion of regularization  at the image level. In this article, we present, analyse and test a different method known in information theory as {\it Maximum Entropy on the Mean} (MEM).

The general idea of {\it maximum entropy} dates back to Jaynes
in 1957 (\cite{jaynes1957information1,jaynes1957information2}). A vast literature originates in Jaynes'
paper, on conceptual and theoretical aspects, but also
on the multiple applications of the principle of
maximum entropy. Based upon these ideas,  Navaza and others developed
a method for solving ill-posed problems in crystallography (\cite{Navaza85, Navaza86,Dacunha90, Decarreau92}). 
Further applications  were made to deconvolution problems in 
astrophysics\footnote{The setting of astronomical imaging is ideal for deconvolution, as in most situations an accurate estimation of the point spread function can be obtained. Indeed, one can estimate the point spread function by calibrating the telescope using reference stars or by analysing the properties of the optics system for example \cite{Nagy98,Soulez13}.} (\cite{skilling1984maximum, Narayan86, Besnerais91, Urban96}).

In image reconstruction (as in other fields of applied
sciences), it is necessary to distinguish between
the method of maximum entropy (ME) and the principle of
Maximum Entropy on the Mean (MEM) as described
in~\cite{gamboa1989methode,Dacunha90}.
In the former, the gray level of each pixel
is interpreted as a probability; the image
is then normalized, so that the values add up to one,
and the maximum entropy principle is applied under the
available constraints (see~\cite{skilling1984maximum} and
the references therein). In the latter (see for example \cite{Heinrich96, Marechal97, Besnerais99}), the pixelated image
is considered as a random vector, and the principle of the
maximum entropy is applied to the probability distribution
of this vector; the available constraints are imposed on the
expectation under the unknown probability distribution, and thus become
constraints on the probability of the image
(see~\cite{le1991aperture,le1999new,Marechal97}).

A definite advantage of the MEM is that, by moving to an
upper level object (a probability distribution on the object
to be recovered), it becomes possible to incorporate nonlinear
constraints via the introduction of a prior distribution.
This results in a very flexible machinery, which makes the MEM
very attractive for a wide variety of applications.

%++++++++++++++++++++++
To the best of our knowledge, the first occurrence of 
the MEM inference scheme appeared in~\cite{shore1981minimum},
for the purpose of spectral analysis.
%++++++++++++++++++++++
Surprisingly, the MEM has not been widely used for
image deconvolution. Indeed, citations from the key articles
suggest that MEM is not well-known in the image processing\footnote{We remark that Noll \cite{Noll} did implement a MEM framework for deblurring a few photo images but with only modest results.}
and machine learning communities and, to our knowledge,
its full potential has not been explored and implemented
in the context of blind deblurring in image processing. One
of the goals of this article is to rectify this by demonstrating
that a reformulation of the MEM method can produce a general
scheme for deconvolution and in certain cases, kernel estimation,
which compares favourably with the state of the art and is
amenable to very large blurs. Moreover, its ability to elegantly
incorporate prior information opens the door to many
possible avenues for generalizations and other applications.

\medskip

Following the presentation in Le Besnerais, Bercher, and Demoment \cite{Besnerais99}, let us 
first state the classical MEM approach introduced by Navaza (\cite{Navaza85, Navaza86}) for solving linear inverse problems.  
We work with vectors in $\R^d$, where in the context of an image,  $d$ represents the number of pixels. 
Given a $d \times d$ convolution matrix ${C}$ and an observable data ${z}\in \R^d$, we wish to determine the ground truth  ${x} \in \R^d$
and noise ${n} \in \R^d$ where 
\begin{equation}\label{oldMEM-0}
  { z} \, = \, {C}{x} + {n}  \qquad\quad  {\rm or} \qquad \quad {z} \, = \, {H} {y}, \quad {\rm where}\,\,\,  {H} = [ {C}, { I}] \quad {\rm and} \quad {y} \, = \, 
 \begin{bmatrix}  { x}\\ { n}  \end{bmatrix}.
 \end{equation}
If $\rho,\mu$ are two probability measures on a subset $\Omega\subset \R^d$, we define 
the Kullback-Leibler divergence to be 
\begin{equation}
  \mathcal{K}(\rho,\mu)=
  \left\{
  \begin{array}{ll}
    \int_{\Omega}\log\left( \diff{\rho}{\mu} \right)\dx{\rho}, &\qquad  \rho,\mu\in\cP{\Omega},\,\rho\ll\mu,\\
    +\infty, &\qquad  \text{ otherwise,}
  \end{array}
  \right.
  \label{eq:KL}
\end{equation}
where $\cP{\Omega}$ denotes the space of probability measures on $\Omega$. 
One can think of $\rho$ as the probability distribution associated with ${x}$ and $\mu$ as some prior distribution. 
In the classical MEM approach, the best estimate of ${y}$ is taken to be 
\[ 
 \widehat{y}  \, := \, \mathbb{E}_{\bar{\rho}} [{X}], \quad {\rm 
 where}\,\,\, \bar{\rho} \, = \, {\argmin}_{\rho}  \left\{\mathcal{K}(\rho,\mu)\} \,\, \big\vert \,\, \, {H}  \mathbb{E}_\rho [{y}] = {z} \right\}.
\]
This infinite-dimensional variational problem is recast as follows: 

\[\text{setting } \,\,\, F({y}) \, : = \, \min_{\rho} \left\{ \mathcal{K}(\rho,\mu) \,\, \big\vert \,\, \mathbb{E}_{\rho} [X] = {y} \right\},  \]
\begin{equation}\label{oldMEM-1}
\text{solve }\,\,\,  \min_{y}  F({y})  \,\,\,\,\,  {\rm s.t} \,\, \,\,\, {H} {y} = {z}.\end{equation}
To solve (\ref{oldMEM-1}), they introduce what they call the {\it log-Laplace transform} of $\mu$ which turns out to be the convex conjugate of $F$ {under some assumptions on $\mu$}:
\begin{equation}\label{oldMEM-2}
  F^\ast ({s}) \, := \, \log \int \exp \langle {s}, {u}\rangle  \, \dx{\mu ({u})}.  
\end{equation}
Then, via Fenchel-Rockafellar duality, they show that the primal problem (\ref{oldMEM-1}) has a dual formulation 
\[  \min_{{H} {y} = {z}}  F({y}) \, = \, \max_{\lambda \in \R^d} \,  D (\lambda) \, : = \, \lambda^T {z} \, - \, F^\ast ({H}^T  \lambda). 
 \]
The primal-dual recovery formula states that if $\widehat{\lambda}$ is a maximizer of $D$ and $\widehat{{s}}   : = {H}^T \widehat{\lambda}$ then the primal solution  is  
\begin{equation}\label{oldMEM-3} \widehat{y} \, = \, \nabla F^\ast (\widehat{{s}}).\end{equation}

\medskip

In this article, we present a general MEM based deconvolution approach which does not directly include a noise component, but rather treats the constraint via an additive fidelity term. We also treat directly the infinite-dimensional primal problem over probability densities. 
A version of this approach was recently implemented by us 
(cf. \cite{Riouxetal}) for the blind deblurring of 2D QR and 1D UPC barcodes, where we showed its remarkable ability to blindly deblur highly blurred and noisy barcodes. 
In this article, we present a self-contained, short (yet complete) analysis of the general theory including a stability analysis, and then apply it to both non-blind deblurring and kernel (PSF) estimation (blind deblurring). 
Our computational results are dramatic and compare well with the state of the art. 

Let us provide a few details of our approach to MEM as a method for deblurring.
%++++++++++++
%While the approach is morally the same as the classical MEM, there are some important differences. 
%++++++++++++  
In our notation, pixel values are taken from a compact subset of $\Omega \subset \R^d$, $\rho$ the  distribution of the image and the prior $\mu$ are both in ${\cP{\Omega}}$, and ${b}\in \R^d$ is a measured signal representing the blurred and possibly  noisy image. 
Our best guess of the ground truth is determined via 
\[ \bar{x} \, : = \, \mathbb{E}_{\bar{\rho}}[{X}],\,\,\,\,\,\,  \hbox{\rm  where} \,\,  \bar{\rho} \, := \, 
 \underset{\rho\in\cP{\Omega}}{\arg\min}  \left\{ \mathcal{K}(\rho,\mu)+\frac{\alpha}{2}\norm{b-C\mathbb{E}_{\rho}[{X}]}_2^2\right\}.\]
Here $\alpha>0$ is the fidelity parameter, directly linked to the fidelity of $C\mathbb{E}_{\rho}[{X}]$ to the blurred image ${b}$
(see (\ref{fid-par})).  
This primal problem has a finite-dimension dual (cf. Theorem \ref{thm-maindual}) and a recovery formula for the minimizer ${\bar{\rho}}$ in terms of the optimizer $\bar{\lambda}$ of the dual problem. 
To compute the expectation of ${\bar{\rho}}$, we use the moment-generating function $\mathbb{M}_{{X}}[t]$ of the 
 prior $\mu\in\cP{\Omega}$ to show in Section \ref{sec-probint} that
 \begin{equation}\label{oldMEM-5}
  \mathbb{E}_{\bar{\rho}}[{X}]=\left.\nabla_t\log\left(\mathbb{M}_{{X}}[t]\right)\right\vert_{C^{T}\bar{\lambda}}.
\end{equation}
This is the analogous statement to (\ref{oldMEM-3}). 
When the moment generating function of $\mu$ is known, finding our estimate of the ground has thus been reduced to the optimization in $d$ dimensions of an explicit strongly convex, differentiable  function. 

Of course, effective implementation of deconvolution is directly tied to the presence of noise. In our MEM formulation, we do not directly include noise as a (solved for) variable.   
We thus make three important remarks about noise: 
\begin{itemize}
\item In Theorem \ref{thm-stability} we prove a stability estimate valid for any prior $\mu$ which supports the fact that our method is stable with respect to small amounts of noise. 
\item With only a very modest prior (with a sole purpose of imposing box constraints), we demonstrate that 
for moderate to large amounts of noise, our method works well by preconditioning with expected patch log likelihood (EPLL) denoising \cite{Zoran11}. 
\item While the inclusion of noise-based priors will be addressed elsewhere, we briefly comment on directly denoising with our MEM deconvolution method in Section \ref{sec-pror}. 
\end{itemize}

Our MEM deconvolution scheme  can equally well be used for PSF estimation, hence blind deblurring. Indeed, 
given some approximation of the ground truth image, say $\tilde x$, we estimate the ground truth PSF $\bar{c}$ as 
\begin{equation}\label{oldMEM-4}
 \bar{c} \, : = \,   \mathbb{E}_{\bar{\eta}}[{X}],\,\,\,\,\,\,  \hbox{\rm  where} \,\,  \bar{\eta}\, := \, 
  \underset{\eta\in\cP{\Omega}}{\arg\min}  \left\{\mathcal{K}(\eta,\nu) + \frac{\gamma}{2}\norm{\mathbb{E}_{\eta}[{X}]*\tilde{x}- {b}}^2_2\right\}.\end{equation}
Many blind deblurring algorithms iterate between estimating the image $x$ and the PSF $c$. One could take a similar iterative approach by invoking prior information about $c$ in  $\nu$. 
However, here we proceed via  symbology; that is, we assume the image contains a known pattern analogous to a finder pattern in a QR or UPC barcode (cf. \cite{Riouxetal}).  
In these cases, one focuses (\ref{oldMEM-4}) on $\tilde x$, the part of the image which is known. While our numerical results  (cf. Figures  \ref{fig:BlindCam}, \ref{fig:BlindText}, \ref{fig:BlindPlane}) are dramatic, the presence and exploitation of a finder pattern is indeed restrictive and does weaken our notion of {\it blind} deblurring. 
Moreover, it renders comparisons with other methods unfair. Nevertheless, we feel that, besides synthetic 
images like barcodes, there are many possible applications wherein some part of an image is {\it a priori} known.
%++++++++++++
As a matter of fact, let us stress that the ability
of the MEM to incorporate nonlinear constraints (via the
prior measure) may tremendously improve the estimation
of the convolution kernel. For example, the choice of the support
of the prior will enable us to confine the kernel to any prescribed
closed convex subset.  In the paper, we exploit very little of 
this potential yet already obtain remarkable reconstructions.
%+++++++++++++

We reiterate  that our approach directly treats the infinite-dimensional primal problem over probability
%+++++++
measures %densities
%+++++++
$\rho$ and its finite-dimensional dual, a setting often referred to as {\it partially finite programming} \cite{Borwein92}. 
While we can also reformulate our primal problem in terms of a finite-dimensional analogue of (\ref{oldMEM-1}) defined at the image level 
(see Section \ref{sec-reform}), an advantage to our approach lies in the fact that one may be interested in computing the optimal $\bar{\rho}$ or a general linear functional ${\mathcal L}$
of $\bar{\rho}$ for which a simple closed form formula (analogous to (\ref{oldMEM-3}) and (\ref{oldMEM-5}) for the expectation) fails to exist. 
Here one could adapt a stochastic gradient descent algorithm to compute 
${\mathcal L}\bar{\rho}$ wherein the only requirement on the prior would be that one can efficiently sample from $\mu$.

The paper is organized as follows. We start by briefly recalling current methods for deblurring, highlighting the common theme of regularization at the level of the image, or level of the PSF (for blind deblurring). 
After a brief section on preliminaries,  we  present  in Sections \ref{sec-MEM1}, \ref{sec-MEM2}, and  \ref{sec-probint} our MEM deconvolution method which we  have just outlined. 
We then address PSF estimations via the exploitation of symbology in Section \ref{sec-MEM3}. We prove a stability estimate in Section \ref{sec-stability}. 
Numerical examples with comparisons for both deconvolution and blind deblurring with symbology are presented in Section \ref{sec-results}. We present a short discussion of the role of the prior and future work in Section \ref{sec-pror}. Finally we end with a short conclusion, highlighting the MEM method and the novelties of this article.

\subsection{Current Methods}

The process of capturing one channel of a blurred image $b\in\R^{n\times m}$ from a ground truth channel $x\in\R^{n\times m}$ is modelled throughout by the relation $b=c*x$,
where $*$ denotes the 2-dimensional convolution between the kernel $c\in\R^{k\times k}$ ($k<n,m$) and the ground truth; this model represents spatially invariant blurring. For images composed of more than one channel, blurring is assumed to act on a per-channel basis. Therefore, we derive a method to deblur one channel and apply it to each channel separately.

Most current blind deblurring methods consist of solving 
\begin{equation}
  \inf_{\substack{x\in\R^{n\times m}\\c\in\R^{k\times k}}}\left\{R(x,c)+\frac{\alpha}{2}\norm{c*x-b}^2_2\right\},
  \label{eq:stdapp}
\end{equation}
where $R:\R^{n\times m}\times\R^{k\times k}\to\R$ serves as a regularizer which permits the imposition of certain constraints on the optimizers and $\alpha>0$ is a fidelity parameter. 
This idea of regularization to solve ill-posed inverse problems  dates back to Tikhonov \cite{Tik}.
The inherent difficulties in solving \eqref{eq:stdapp} depend on the choice of regularizer. A common first step is to iterately solve the following two subproblems, 
\[
  \tilde{c}\in\inf_{c\in\R^{k\times k}}\left\{ R(\tilde{x},c)
   +\frac{\alpha}{2}\norm{c*\tilde{x}-b}^2_2 \right\},
\]
\[
  \tilde{x}\in\inf_{x\in\R^{n\times m}}\left\{ R(x,\tilde{c}) +\frac{\alpha}{2}\norm{\tilde{c}*x-b}^2_2 \right\},
\]
the first subproblem is a kernel estimation and the second is a deconvolution. We will utilize this approach in the sequel. 

Taking $R_c=\norm{\cdot}_2^2$ and replacing $\tilde{x}$ and $b$ by their derivatives to estimate the kernel has yielded good results in different methods (see e.g. \cite{Cho09,Liu19,Pan14,Pan17,Yan17}) and can be efficiently solved using spectral methods. The deconvolution step often involves more elaborate regularizers which may be non-differentiable (such as the $\ell_1$ regularizer) or non-convex (such as the $\ell_0$ regularizer), thus it is often the case that new optimization techniques must be developped to solve the resulting problem. For example, the half-quadratic splitting method \cite{Xu11} has been used to great effect when the $\ell_0$ regularizer is employed \cite{Liu19,Pan14,Pan17,Yan17}. Optimization methods that are well-suited for the $\ell_1$ regularizer include primal-dual interior-point methods \cite{Kim07}, iterative shrinkage thresholding algorithms \cite{Beck09,Daub04}, and the split Bregman method \cite{Goldstein09}. In our case, standard convex optimization software can be used regardless of the choice of prior, as the problem to be solved is strongly convex and differentiable.

Approaches that are not based on machine learning differ mostly in the choice of regularizer; common choices include $\ell_2,\ell_1,$ and $\ell_0$-regularization (and combinations thereof), which promote sparsity of the solution to varying degrees (see \cite{Cho09,Joshi08,Levin07,You96,Yuan07},\cite{Blomgren97,Chan98,Dobson96,Perrone16,Rudin94,Shan08,Wang08,Xu10,You962}, and \cite{Liu19,Pan14,PanE14,Pan17} for some examples which employ, respectively, $\ell_2$, $\ell_1$ and $\ell_0$ regularizers on the image or its derivatives).

Such regularizers may be employed to enforce sparsity on other quantities related to the image. For example, one can employ $\ell_0$-regularization of the dark or bright channel to promote sparsity of a channel consisting of local minima or maxima in the intensity channel \cite{Pan16,Yan17}. Further, one can regularize the coeffients of some representation of the image. In particular, many natural images have sparse representations in certain transformation domains. Some frames which have seen use in deblurring and deconvolution include wavelets \cite{Banham96,Elad10,Figueiredo03}, DCT (Discrete Cosine Transform) \cite{Ng99},  contourlets \cite{Tzeng10}, and framelets \cite{Cai09}. Moreover, combinations of different frames to exploit their individual properties can also be utilized \cite{Cai092,Fadili06,Starck03}.

Other notable choices of regularizer include the ratio of the $l_1$ and $l_2$ norm to enforce sparsity \cite{Krishnan11}, the weighted nuclear norm, which ensures that the image or gradient matrices have low rank \cite{Ren16,Yair18}, a penalization based on text-specific properties \cite{Cho12}, the spatially-variant hyper-Laplacian penalty \cite{Cheng19,Shi16}, a surface-aware prior which penalizes the surface area of the image \cite{Liu19}, and approximations of the $\ell_0$ penalty via concatenation of a quadratic penalty and a constant \cite{Xu13} or via a logarithmic prior \cite{Perrone15}.

Approaches based on machine learning include modelling the optimization problem as a deep neural network \cite{Schuler15,Wu20} and estimating the ground truth image from a blurred input without estimating the kernel using convolutional neural networks (CNNs) \cite{Nah17,Noroozi17}, recurrent neural networks (RNNs) \cite{Tao18} or generative adversarial networks (GANs)\cite{Kupyn17,Ramakrishnan17}. Other popular methods consist of estimating the kernel via CNNs and using it to perform deconvolution \cite{Sun15}, deconvolving via inversion in the Fourier domain and denoising the result using a neural network \cite{Dabov08,Danielyan12,Schuler13}, and 
learning dictionaries for sparse representations of natural images \cite{Dong13,Dong11,Hu10,Mairal08,Nimisha17,Xu17}.

\section{Preliminaries}
We begin by recalling some standard definitions and establishing notation.
We refer to \cite{Zalinescu02} for convex analysis in infinite dimensions and \cite{Rockafellar70} for the finite-dimensional setting. We follow \cite{Rudin87} as a standard reference for real analysis.

Letting $(X,\tau)$ be a separated locally convex space, we denote by $X^*$ its topological dual. The duality pairing between $X$ and its dual will be written as $(\cdot,\cdot):X\times X^*\to\R$ in order to distinguish it from the canonical inner product on $\R^d$, $\ip{\cdot}{\cdot}:\R^d\times\R^d\to\R$. 
For $f:X\to\bar{\R}\equiv\R\cup\{-\infty,+\infty\}$, an extended real-valued function on $X$, the (Fenchel) conjugate of $f$ is $f^*:X^*\to\bar{\R}$ defined by
  \[
    f^*(x^*)=\sup_{x\in X}\left\{ (x,x^*)-f(x) \right\},
  \]
  using the convention $a-(-\infty)=+\infty$ and $a-(+\infty)=-\infty$ for every $a\in\R$.
  The subdifferential of $f$ at $\bar{x}\in X$ is the set
  \[
    \partial f(\bar{x})=\left\{x^*\in X^*|(x-\bar{x},x^*)\leq f(x)-f(\bar{x})\;\forall x\in X\right\}.
  \]
We define $\dom f:=\left\{x\in X|f(x)< +\infty  \right\}$, the domain of $f$, and say that $f$ is proper if $\dom f\neq\emptyset$ and $f(x)>-\infty$ for every $x\in X$. $f$ is said to be lower semicontinuous if $f^{-1}([-\infty,\alpha])$ is $\tau$-closed for every $\alpha\in\R$. 
 
  A proper function $f$ is convex provided for every $x,y\in \dom f$ and $\lambda\in (0,1)$,
\[
  f(\lambda x+(1-\lambda)y)\leq \lambda f(x)+(1-\lambda)f(y).
\]
If the above inequality is strict whenever $x\neq y$, $f$ is said to be strictly convex. If $f$ is proper and for every $x,y\in \dom f$ and $\lambda \in (0,1)$,
\[
  f(\lambda x+(1-\lambda)y)+\lambda(1-\lambda)\frac{c}{2}\norm{x-y}^2\leq \lambda f(x)+(1-\lambda)f(y),
\]
then $f$ is called $c$-strongly convex.

For any set $A\subseteq X$, the indicator function of $A$ is given by 
\[
  \delta_A:X\to\R\cup\{+\infty\},\qquad x\mapsto\left\{
    \begin{array}{ll}
      0,&x\in A,\\
      +\infty, & \text{otherwise.}
    \end{array}
  \right.
\]

For any $\Omega\subseteq\R^d$, we denote by $\cP{\Omega}$ the set of probability measures on $\Omega$.
% Let $\eta$ be a signed Borel measure on $\Omega$, we define the total variation of $\eta$ by
%  \[
%  \abs{\eta}(\Omega)=\sup\left\{\sum_{i\in\N}\abs{\eta(\Omega_i)}|\cup_{i\in\N}\Omega_i=\Omega, \Omega_i\cap\Omega_j=\emptyset\: (i\neq j) \right\}.
%\]
The set of all signed Borel measures with finite total variation on $\Omega$ will be denoted by $\cM{\Omega}$. We say that a measure is $\sigma$-finite (on $\Omega$) if $\Omega=\cup_{i\in\N}\Omega_i$ with $\abs{\mu(\Omega_i)}< +\infty$.

Let $\mu$ be a positive $\sigma$-finite Borel measure on $\Omega$ and $\rho$ be an arbitrary Borel measure on $\Omega$, we write $\rho\ll\mu$ to signify that $\rho$ is absolutely continuous with respect to $\mu$, i.e. if $A\subseteq \Omega$ is such that $\mu(A)=0$, then $\rho(A)=0$. If $\rho\ll\mu$ there exists a unique function $\diff{\rho}{\mu}\in L^1(\mu)$ for which \[
  \rho(A)=\int_A \diff{\rho}{\mu}\, \dx{\mu},\qquad \text{$\forall\;A\subseteq\Omega$ measurable}.
\]
The function $\diff{\rho}{\mu}$ is known as the Radon-Nikodym derivative (cf. \cite[Thm. 6.10]{Rudin87}).
These measure-theoretic notions were used previously to define the Kullback-Leibler divergence \eqref{eq:KL}. 
%The Kullback-Leibler divergence between $\rho,\mu\in\cM{\Omega}$ is the functional
%\begin{equation}
%  \mathcal{K}(\rho,\mu)=
%  \left\{
%  \begin{array}{ll}
%    \int_{\Omega}\log\left( \diff{\rho}{\mu} \right)\dx{\rho}, & \rho,\mu\in\cP{\Omega},\rho\ll\mu,\\
%    +\infty, & \text{ otherwise.}
%  \end{array}
%  \right.
%  \label{eq:KL}
%\end{equation}

For $\Omega\subseteq\R^d$, $\eta\in\cM{\Omega}$ we denote, by a slight abuse of notation, $\mathbb{E}_{\eta}[{X}]$ to be a vector whose $k^{\text{th}}$ component is $(\mathbb{E}_{\eta}[{X}])_{k}=\int_{\Omega}x_k\dx{\eta(x)}$. Thus, $\mathbb{E}_{(\cdot)}[{X}]$ is a map from $\cM{\Omega}$ to $\R^d$ whose restriction to $\cP{\Omega}$ is known as the expectation of the random vector ${X}=[X_1,\cdots,X_d]$ associated with the input measure.

Finally, the smallest (resp. largest) singular value $\sigma_{\min}(C)$ (resp. $\sigma_{\max}(C)$) of the matrix $C\in\R^{m\times n}$ is the square root of the smallest (resp. largest) eigenvalue of $C^TC$.
%-------------------------------------------------------------------------
\section{The MEM Method}
\subsection{Kullback-Leibler Regularized Deconvolution and the Maximum Entropy on the Mean Framework}\label{sec-MEM1}  
{\bf Notation:}
We first establish some notation pertaining to deconvolution.
The convolution operator $c*$ will be denoted by the matrix $C:\R^d\to\R^d$ acting on a vectorized image $x\in\R^d$ for $d=nm$ and resulting in a vectorized blurred image for which the $k^{th}$ coordinate in $\R^d$ corresponds to the $k^{th}$ pixel of the image. We assume throughout that the matrix $C$ is nonsingular.

\medskip
%The set of probability measures on a set $\Omega\subset\R^d$ will be denoted $\cP{\Omega}$.
%The expectation $\mathbb{E}_{\rho}[\bm{X}]$ of a random vector $\bm{X}=[X_1,\dots,X_d]^T$ on $\Omega\subset\R^d$ associated with a probability measure $\rho\in\cP{\Omega}$  is a vector whose $k^{th}$ component is given by $(\mathbb{E}_{\rho}[\bm{X}])_k=\int_{\Omega}x_k\dx{\rho(x)}$.

%Letting $\rho,\mu\in\cP{\Omega}$, we write $\rho\ll\mu$ to signify that $\rho$ is absolutely continuous with respect to $\mu$, i.e. if $A\subseteq\Omega$ is such that $\mu(A)=0$, then $\rho(A)=0$. Furthermore, if $\rho\ll\mu$, $\diff{\rho}{\mu}:\Omega\to[0,\infty)$ is the unique function in $L^1(\mu)$ such that 
%\[
%  \rho(E)=\int_{E}\diff{\rho}{\mu}\dx{\mu},\qquad(E\subset\Omega \text{ measurable}),
%\]
%it is called the Radon-Nikodym derivative of $\rho$ with respect to $\mu$ (cf. \cite[Thm. 6.10]{Rudin87}).

%Finally, the Kullback-Leibler divergence between the probability measures $\rho,\mu$ on $\Omega$ is the functional 
%\begin{equation}
 % \mathcal{K}(\rho,\mu)=
 % \left\{
 % \begin{array}{ll}
 %   \int_{\Omega}\log\left( \diff{\rho}{\mu} \right)\dx{\rho}, &  (\rho,\mu)\in\cP{\Omega}\times\cP{\Omega},\rho\ll\mu,\\
 %   +\infty, & \text{ otherwise.}
 % \end{array}
 % \right.
%\end{equation}

We recall that traditional deconvolution software functions by solving \eqref{eq:stdapp} with a fixed convolution kernel $c$. Our approach differs from previous work by adopting the maximum entropy on the mean framework which posits that the state best describing a system is given by the mean of the probability distribution which maximizes some measure of entropy \cite{jaynes1957information1,jaynes1957information2}. As such, taking $\Omega\subseteq\R^d$ to be compact, $\mu\in\cP{\Omega}$ to be a prior measure  and $b\in\R^d$ to be a blurred image, our approach is to determine the solution of 

\begin{equation}
  \inf_{\rho\in\cP{\Omega}}\left\{ \mathcal{K}(\rho,\mu)+\frac{\alpha}{2}\norm{b-C\mathbb{E}_{\rho}[{X}]}_2^2\right\}=\inf_{\cP{\Omega}}\left\{ f+g\circ A \right\},
  \label{eq:Primal}
\end{equation}
for
\begin{equation}
  f=\mathcal{K}(\cdot,\mu),\quad g=\frac{\alpha}{2}\norm{b+(\cdot)}_2^2,\quad A=-C\mathbb{E}_{(\cdot)}[{X}].
\label{eq:fg}
\end{equation}
The following lemma establishes some basic properties of $f$.
\begin{lemma}
  The functional $f:\cM{\Omega}\to\bar{\R}$ is proper, weak$^*$ lower semicontinuous and strictly convex.  
  \label{lem:KLlem}
\end{lemma}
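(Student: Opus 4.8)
The plan is to verify each of the three asserted properties of $f = \mathcal{K}(\cdot,\mu)$ in turn, working on the space $\cM{\Omega}$ equipped with the weak$^*$ topology (which is natural since, for $\Omega$ compact, $\cM{\Omega}$ is the dual of $C(\Omega)$). Throughout I will use the convention built into \eqref{eq:KL} that $f(\rho) = +\infty$ unless $\rho \in \cP{\Omega}$ with $\rho \ll \mu$.

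First, \emph{properness}. This is immediate: taking $\rho = \mu$ gives $\diff{\rho}{\mu} \equiv 1$, hence $\mathcal{K}(\mu,\mu) = \int_\Omega \log(1)\,\dx\mu = 0 < +\infty$, so $\dom f \neq \emptyset$. For $f > -\infty$ everywhere, I would invoke the standard lower bound for the KL divergence: writing $h = \diff{\rho}{\mu}$ and using that $\rho,\mu$ are both probability measures, Jensen's inequality applied to the (convex) function $t \mapsto t\log t$ against the probability measure $\mu$ yields $\int_\Omega h\log h\,\dx\mu \geq \left(\int_\Omega h\,\dx\mu\right)\log\left(\int_\Omega h\,\dx\mu\right) = 1\cdot\log 1 = 0$, and $\int_\Omega h\log h\,\dx\mu = \int_\Omega \log h\,\dx\rho = \mathcal{K}(\rho,\mu)$. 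So in fact $f \geq 0$, which is more than enough.

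Second, \emph{strict convexity}. The cleanest route is to reduce to a scalar statement: $\mathcal{K}(\rho,\mu) = \int_\Omega \phi\!\left(\diff{\rho}{\mu}\right)\dx\mu$ where $\phi(t) = t\log t$ (with $\phi(0)=0$), which is strictly convex on $[0,\infty)$. Given $\rho_0 \neq \rho_1$ in $\dom f$ and $\lambda \in (0,1)$, both are absolutely continuous w.r.t.\ $\mu$, so $\rho_\lambda := \lambda\rho_0 + (1-\lambda)\rho_1 \ll \mu$ with $\diff{\rho_\lambda}{\mu} = \lambda\diff{\rho_0}{\mu} + (1-\lambda)\diff{\rho_1}{\mu}$ $\mu$-a.e. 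Pointwise convexity of $\phi$ gives $f(\rho_\lambda) \leq \lambda f(\rho_0) + (1-\lambda)f(\rho_1)$; for strictness, note $\rho_0 \neq \rho_1$ forces $\diff{\rho_0}{\mu} \neq \diff{\rho_1}{\mu}$ on a set of positive $\mu$-measure, where strict convexity of $\phi$ makes the integrand inequality strict, hence the integral inequality strict. (One must be slightly careful that the integrals are not $+\infty$, but since $\rho_0,\rho_1 \in \dom f$ the right-hand side is finite, so the argument goes through.)

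Third, \emph{weak$^*$ lower semicontinuity} — this is the main obstacle and the only part requiring real care. I would use the variational (Donsker–Varadhan type) representation of KL divergence: for $\rho,\mu \in \cP{\Omega}$,
\[
  \mathcal{K}(\rho,\mu) = \sup_{u \in C(\Omega)}\left\{ \int_\Omega u\,\dx\rho - \log\int_\Omega e^{u}\,\dx\mu \right\}.
\]
Granting this identity (which holds because $\Omega$ is compact, so $C(\Omega)$ is a rich enough test class, and can be proved by a standard argument optimizing over $u = \log\diff{\rho}{\mu}$ truncations), lower semicontinuity is then automatic: for each fixed $u \in C(\Omega)$ the map $\rho \mapsto \int_\Omega u\,\dx\rho - \log\int_\Omega e^u\,\dx\mu$ is weak$^*$ continuous on $\cM{\Omega}$ (the first term by definition of weak$^*$ convergence, the second being a constant), and a pointwise supremum of continuous functions is lower semicontinuous. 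One also has to handle the case $\rho \notin \cP{\Omega}$: if $\rho$ is not a probability measure (e.g.\ total mass $\neq 1$ or not nonnegative), one checks the supremum above is $+\infty$, matching the definition $f(\rho) = +\infty$; alternatively, note $\cP{\Omega}$ is weak$^*$ closed in $\cM{\Omega}$ and add $\delta_{\cP{\Omega}}$ separately. Establishing the variational formula rigorously — in particular the direction $\mathcal{K}(\rho,\mu) \le \sup(\cdots)$ via an approximation of $\log\diff{\rho}{\mu}$ by continuous functions and a monotone/dominated convergence argument — is where the technical work concentrates; everything else is bookkeeping.
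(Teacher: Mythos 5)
Your proposal is correct and follows essentially the same route as the paper. For strict convexity, your pointwise argument via strict convexity of $t\mapsto t\log t$ applied to the Radon--Nikodym derivatives is exactly the paper's log-sum-inequality computation in different clothing (the two-term log-sum inequality \emph{is} convexity of $t\log t$), including the key observation that $\rho_0\neq\rho_1$ forces the densities to differ on a set of positive $\mu$-measure so that the strict integrand inequality survives integration. For properness and weak$^*$ lower semicontinuity the paper simply cites \cite[Thm.\ 3.2.17]{Deuschel89} and notes that $f\equiv+\infty$ off $\cP{\Omega}$; you instead supply the standard argument underlying that citation (Jensen for $f\ge 0$, the Donsker--Varadhan representation as a supremum of weak$^*$-continuous affine functionals, plus weak$^*$-closedness of $\cP{\Omega}$ to handle non-probability measures), which is sound and makes the lemma self-contained, but is not a genuinely different method.
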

\begin{proof}
  We begin with strict convexity of $f$. Let $x\in\Omega$ and $t\in(0,1)$ be arbitrary moreover let $\rho_1\neq\rho_2$ be elements of $\cP{\Omega}$ and $\rho_t=t\rho_1+(1-t)\rho_2$. We have
\begin{align*}
  \log\left( \frac{\diff{\rho_t}{\mu}(x)}{t+(1-t)} \right)\diff{\rho_t}{\mu}(x)&=\log\left( \frac{t\diff{\rho_1}{\mu}(x)+(1-t)\diff{\rho_2}{\mu}(x)}{t+(1-t)} \right)\left(t\diff{\rho_1}{\mu}(x)+(1-t)\diff{\rho_2}{\mu}(x) \right)\\
  &\leq t\log\left( \diff{\rho_1}{\mu}(x)\right)\diff{\rho_1}{\mu}(x)+(1-t)\log\left( \diff{\rho_2}{\mu}(x)\right)\diff{\rho_2}{\mu}(x).
\end{align*} 
The inequality is due to the log-sum inequality \cite[Thm. 2.7.1]{Cover06}, and since $\rho_1\neq \rho_2$, $\diff{\rho_1}{\mu}$ and $\diff{\rho_2}{\mu}$ differ on a set $E\subseteq\Omega$ such that $\mu(E)>0$. The strict log-sum inequality  therefore implies that the inequality is strict for every $x\in E$. 
Since integration preserves strict inequalities,
\[
  f(\rho_t)=\int_{\Omega\backslash E}\log\left( \diff{\rho_t}{\mu} \right)\diff{\rho_t}{\mu}\dx{\mu}+\int_{E}\log\left( \diff{\rho_t}{\mu} \right)\diff{\rho_t}{\mu}\dx{\mu}<tf(\rho_1)+(1-t)f(\rho_2)
\]
so $f$ is, indeed, strictly convex.

It is well known that the restriction of $f$ to $\cP{\Omega}$ is weak$^*$ lower semicontinuous and proper (cf. \cite[Thm. 3.2.17]{Deuschel89}). Since $f\equiv +\infty$ on $\cM{\Omega}\backslash \cP{\Omega}$, $f$ preserves these properties.
\end{proof}
%Notably, if $\mu$ is in turn absolutely continuous with respect to the Lebesgue measure $\dx{x}$ then $\mu=q(x)\dx{x}$ and $\rho=p(x)\dx{x}$. In this context, we denote by $\mathcal{K}(\rho,\mu)$ the Kullback-Leibler divergence between $\rho$ and $\mu$, i.e.   
%\begin{equation}
%  \mathcal{K}(\rho,\mu)=
%  \left\{
%  \begin{array}{s properll}
%    \int_{\Omega}\log\left( \frac{p(x)}{q(x)} \right)p(x)\dx{x}, & \rho\ll\mu,\\
%    +\infty, & \text{ otherwise.}
%  \end{array}
%  \right.
%\end{equation}

%Given $b\in\R^{d}$ and $\Omega\Subset\R^d$ compact, 
%we wish to determine the optimizer of

%\begin{equation}
%  \inf_{\rho\in\cP{\Omega}}\left\{ \frac{\alpha}{2}\norm{c*\mathbb{E}_{\rho}[\bm{X}]-b}_2^2 + \mathcal{K}(\rho,\mu)\right\}.
%  \label{eq:Primal}
%\end{equation}
%In this framework, the expectation of this optimizer is taken to be the estimate of the ground truth.
%Throughout, $\mu$ will be chosen to admit a density $q$ with respect to the Lebesgue measure $\dx{x}$ and $\rho$ to admit a density $p$.
Problem \eqref{eq:Primal} is an infinite-dimensional optimization problem with no obvious solution and is thus intractable in its current form. However, existence and uniqueness of solutions thereof is established in the following remark.
\begin{remark} 
  First, the objective function in \eqref{eq:Primal} is proper, strictly convex and weak$^*$ lower semicontinuous since $f$ is proper, strictly convex and weak$^*$ lower semicontinuous whereas $g\circ A$ is proper, weak$^*$ continuous and convex.
 
  Now, recall that the Riesz representation theorem \cite[Cor. 7.18]{Folland99} identifies $\cM{\Omega}$ as being isomorphic to the dual space of $(C(\Omega),\norm{\cdot}_{\infty})$. Hence, by the Banach-Alaoglu theorem, \cite[Thm. 5.18]{Folland99} the unit ball of $\cM{\Omega}$ in the norm-induced topology\footnote{The norm here is given by the total variation, we make precise that the weak$^*$ topology will be the only topology considered in the sequel.} ($\mathbb{B}^*$)  is weak$^*$-compact. 

  Since $\dom f\subseteq \cP{\Omega}\subseteq\mathbb{B}^*$, standard theory for the existence of minimizers of $\tau$-lower semicontinuous functionals on $\tau$-compact sets \cite[Cor. 3.2.3]{Butazzo14} imply that \eqref{eq:Primal} has a solution and strict convexity of $f$ guarantees that it is unique.
%Recall that the Banach-Alaoglu theorem yields weak$^{*}$ compactness of the  
%
%  Note that $\cP{\Omega}$ consists of positive measures with total variation $1$, hence they form a subset of the unit ball of $(\cM{\Omega},\abs{\cdot})\cong(C(\Omega),\norm{\cdot}_{\infty})^*$ by the Riesz representation theorem \cite[Thm. 6.14]{Rudin87} \hl{maybe change reference}. By the Banach-Alaoglu theorem, the unit ball in the norm-induced topology of $\cM{\Omega}$ is compact. \hl{ended here}
%  is weak$^*$ compact by the Riesz representation theorem \cite[Thm. 6.14]{Rudin87}. Since $\Omega$ is compact, the notions of weak and weak$^*$ convergence of measures coincide, so  establishes existence of solutions whereas strict convexity guarantees uniqueness.
\end{remark}
Even with this theoretical guarantee, direct computation of solutions to \eqref{eq:Primal} remains infeasible.
In the sequel, a corresponding finite-dimensional dual problem will be established which will, along with a method to recover the expectation of solutions of \eqref{eq:Primal} from solutions of this dual problem, permit an efficient and accurate estimation of the original image.
\subsection{Dual Problem}\label{sec-MEM2}

In order to derive the (Fenchel-Rockafellar) dual problem to \eqref{eq:Primal}  we provide the reader with the Fenchel-Rockafellar duality theorem in a form expedient for our study, cf. e.g.  \cite[Cor. 2.8.5]{Zalinescu02}.
%\cite[Cor. 2.5.7]{Zalinescu02} and \cite[Prop. 2.33]{Barbu12}.

\begin{theorem}[Fenchel-Rockafellar Duality Theorem]
  Let $(X,\tau)$ and $(Y,\tau')$ be locally convex spaces and let $X^*$ and $Y^*$ denote their dual spaces. Moreover, let $f:X\to\R\cup\{+\infty\}$ and $g:Y\to\R\cup\{+\infty\}$ be convex, lower semicontinuous (in their respective topologies) %\footnote{For convex functions, lower semicontinuity is equivalent to weak lower semicontinuity (cf. \cite[Thm. 2.2.1]{Zalinescu02})} 
  and proper functions and let $A$ be a continuous linear operator from $X$ to $Y$. Assume that there exists $\bar{y}\in A\dom f\cap \dom g$ such that $g$ is continuous at $\bar{y}$. Then 
  \begin{equation}    
  \inf_{x\in X}\left\{ f(x)+g(-Ax) \right\}=\max_{y^*\in Y^*}\left\{-f^*(A^*y^*)-g^*(y^*)  \right\}
  \label{eq:FRdualform}
\end{equation}
with $A^*$ denoting the adjoint of $A$. Moreover, $\bar{x}$ is optimal in the primal problem if and only if there exists $\bar{y}^*\in Y^*$ satisfying $A^*\bar{y}^*\in\partial f(\bar{x})$ and $\bar{y}^*\in\partial g(-A\bar{x})$.
  %\hl{Moreover, if $\bar{y}^*$ is optimal in the maximization problem then} \[ \bar{x}\in\partial f^*(A^*\bar{y}^*)=\argmax_{x\in X}\left\{ \dup{x}{A^*\bar{y}^*}-f(x) \right\}\]  \hl{is optimal in the minimization problem (known as the primal-dual recovery formula)}. 

%Moreover, $\bar{x}$ is optimal in the infimization problem if and only if there exists $\bar{y}^*\in Y^*$ such that $A^*\bar{y}^*\in\partial f(\bar{x})$ and $\bar{y}^*=\nabla g(-A\bar{x})$ provided $g$ is continuous and G\^{a}teaux differentiable at these points; $\partial f$ denoting the subdifferential of $f$. Finally, if $f$ and $g$ are lower-semicontinuous, $A^*\bar{y}^*\in\partial f(\bar{x})\iff\bar{x}\in\partial f^*(A^*\bar{y}^*)$ and 
%\begin{linenomath}
%$\bar{y}^*=\nabla g(-A\bar{x})\iff\nabla g^*(\bar{y}^*)=-A\bar{x}$.
%\end{linenomath}
\label{thm:FR}
\end{theorem}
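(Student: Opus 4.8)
The plan is the textbook route to Fenchel--Rockafellar duality: get weak duality for free from the Fenchel--Young inequality, promote it to strong duality with dual attainment via a perturbation (value-function) argument in which the continuity hypothesis on $g$ is precisely what makes the value function well behaved at the origin, and then read off the optimality conditions by tracking when the two Fenchel--Young inequalities used for weak duality hold with equality.

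First I would establish weak duality. For arbitrary $x\in X$ and $y^*\in Y^*$, Fenchel--Young gives $f(x)+f^*(A^*y^*)\geq (x,A^*y^*)=(Ax,y^*)$ and $g(-Ax)+g^*(y^*)\geq (-Ax,y^*)$; adding and rearranging yields $f(x)+g(-Ax)\geq -f^*(A^*y^*)-g^*(y^*)$, so $p:=\inf_x\{f(x)+g(-Ax)\}\geq d:=\sup_{y^*}\{-f^*(A^*y^*)-g^*(y^*)\}$. It then remains to produce some $\bar y^*$ attaining $d$ with $-f^*(A^*\bar y^*)-g^*(\bar y^*)=p$. To this end I would introduce the value function $\varphi(y):=\inf_{x\in X}\{f(x)+g(y-Ax)\}$, which is convex on $Y$ (being the infimal projection of the jointly convex map $(x,y)\mapsto f(x)+g(y-Ax)$) and satisfies $\varphi(0)=p$; a change of variables $z=y-Ax$ in the conjugate computes $\varphi^*(y^*)=f^*(A^*y^*)+g^*(y^*)$, so that $d=\sup_{y^*}\{-\varphi^*(y^*)\}=\varphi^{**}(0)$.

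The crux is strong duality, i.e.\ $\varphi(0)=\varphi^{**}(0)$ with the outer supremum attained; this is where the constraint qualification enters. It provides $x_0\in\dom f$ with $g$ continuous at the point $-Ax_0\in\dom g$ appearing in the objective, hence $g\leq M$ on $-Ax_0+V$ for some neighbourhood $V$ of $0$, and therefore $\varphi(y)\leq f(x_0)+g(y-Ax_0)\leq f(x_0)+M$ for all $y\in V$: the convex function $\varphi$ is bounded above near $0$. A convex function bounded above on a neighbourhood of a point is continuous there, and a convex function continuous at a point of its domain has a nonempty subdifferential there (Hahn--Banach, using local convexity), so one may pick $\bar y^*\in\partial\varphi(0)$. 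The subgradient inequality at $0$ reads $\varphi(0)+\varphi^*(\bar y^*)\leq 0$, whence $\varphi(0)\leq -\varphi^*(\bar y^*)\leq\varphi^{**}(0)\leq\varphi(0)$; all terms coincide, which gives $p=d$ and shows $\bar y^*$ attains the dual maximum, i.e.\ \eqref{eq:FRdualform}. I expect this subdifferentiability step to be the one genuine obstacle: the rest is bookkeeping with Fenchel--Young, whereas securing $\partial\varphi(0)\neq\emptyset$ really does require both the continuity hypothesis on $g$ and the local convexity of the spaces.

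For the optimality characterization, suppose $\bar x$ is primal-optimal and let $\bar y^*$ be a dual solution (which exists by the strong-duality step just described). Then $0=p-d=f(\bar x)+g(-A\bar x)+f^*(A^*\bar y^*)+g^*(\bar y^*)$, and inserting the cancelling terms $\pm (A\bar x,\bar y^*)$ rewrites this as the sum of $\left[f(\bar x)+f^*(A^*\bar y^*)-(A\bar x,\bar y^*)\right]$ and $\left[g(-A\bar x)+g^*(\bar y^*)+(A\bar x,\bar y^*)\right]$, each nonnegative by Fenchel--Young; hence each vanishes, which is exactly $A^*\bar y^*\in\partial f(\bar x)$ and $\bar y^*\in\partial g(-A\bar x)$. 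Conversely, if some $\bar y^*$ satisfies these two inclusions, the corresponding Fenchel--Young relations hold with equality, so $f(\bar x)+g(-A\bar x)=-f^*(A^*\bar y^*)-g^*(\bar y^*)\leq d\leq p\leq f(\bar x)+g(-A\bar x)$, forcing equality throughout and hence primal optimality of $\bar x$.
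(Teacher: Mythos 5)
The paper does not actually prove this theorem: it is quoted as a known result with a citation to Z\u{a}linescu [Cor.\ 2.8.5], and only its hypotheses are later verified for the specific $f$, $g$, $A$ of the deblurring problem. Your argument is the standard perturbation-function proof of precisely that cited result, and it is correct: weak duality from Fenchel--Young; the value function $\varphi(y)=\inf_{x}\{f(x)+g(y-Ax)\}$ with $\varphi^*(y^*)=f^*(A^*y^*)+g^*(y^*)$ and $\varphi^{**}(0)=d$; local boundedness of $\varphi$ near $0$ coming exactly from the continuity hypothesis on $g$, hence continuity of $\varphi$ at $0$ and $\partial\varphi(0)\neq\emptyset$, which yields $p=d$ with dual attainment; and the optimality characterization by splitting the zero duality gap into two nonnegative Fenchel--Young defects. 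Two small remarks. First, you quietly repaired a sign in the constraint qualification: as stated, the theorem asks for $g$ to be continuous at a point of $A\dom f\cap\dom g$, but since the primal objective is $g(-Ax)$, the estimate $\varphi(y)\le f(x_0)+g(y-Ax_0)$ that you (correctly) use requires continuity of $g$ at a point of $(-A)\dom f\cap\dom g$. This is a harmless inconsistency in the statement as transcribed (and immaterial in the paper's application, where $\dom g=\R^d$ and $g$ is continuous everywhere), but it is worth flagging. Second, your subdifferentiability step implicitly assumes $\varphi(0)>-\infty$; in the degenerate case $\varphi(0)=-\infty$ one has $\varphi^*\equiv+\infty$, both sides of the duality identity equal $-\infty$, and the conclusion holds trivially, so nothing is lost, but a sentence disposing of that case would make the argument airtight.
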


In \eqref{eq:FRdualform}, the minimization problem is referred to as the primal problem, whereas the maximization problem is called the dual problem. Under certain conditions, a solution to the primal problem can be obtained from a solution to the dual problem. 

\begin{remark}[Primal-Dual Recovery]
  In the context of Theorem \ref{thm:FR}, $f^*$ and $g^*$ are proper, lower semicontinuous and convex, also $(f^*)^*=f$ and $(g^*)^*=g$ \cite[Thm. 2.3.3]{Zalinescu02}. Suppose additionally that $0\in\interior(A^*\dom g^*-\dom f^*)$. 
  
  Let $\bar{y}^*\in\argmax_{Y^*} \left\{-f^*\circ A^*-g^*\right\}$. By the first order optimality conditions, \cite[Thm. 2.5.7]{Zalinescu02} 
\[
  0\in\partial \left( f^*\circ A^*+g \right)(\bar{y}^*)=A\partial f^*(A^*\bar{y}^*)+\partial g(y^*),
\]
the second expression is due to \cite[Thm. 2.168]{Bonnans00} (the conditions to apply this theorem are satisfied by assumption). Consequently, there exists $\bar{z}\in\partial g^*(\bar{y}^*)$ and $\bar{x}\in\partial f^*(A^*\bar{y}^*)$ for which $\bar{z}=-A\bar{x}$. Since $f$ and $g$ are proper, lower semicontinuous and convex we have \cite[Thm. 2.4.2 (iii)]{Zalinescu02}:
\[
  A^*\bar{y}^*\in\partial f(\bar{x}),\qquad \bar{y}^*\in\partial g(\bar{z})=\partial g(-A\bar{x}).
\] Thus Theorem \ref{thm:FR} demonstrates that $\bar{x}$ is a solution of the primal problem, that is if $\bar{y}^*$ is a solution of the dual problem, $\partial f^*(A^*\bar{y}^*)$ contains a solution to the primal problem.

If, additionally, $f^*(A^*\bar{y}^*)< +\infty$ \cite[Prop. 2.118]{Bonnans00} implies that, 
  \begin{equation} 
    \bar{x}\in\partial f^*(A^*\bar{y}^*) \, =\, \underset{x\in X}{\arg\max}\left\{ \dup{x}{A^*\bar{y}^*}-f(x) \right\}.
  \label{eq:PrimalDualRec}
\end{equation}
We refer to \eqref{eq:PrimalDualRec} as the primal-dual recovery formula.
\label{rem:primdual}
\end{remark}

%\begin{proof}
%  For the general statement and proof thereof, see \cite[Cor. 2.8.5]{Zalinescu02}. The theorem has been particularized to our application by using the following properties:
%\begin{enumerate}
%  \item If $f:X\to\R\cup\{+\infty\}$ is convex, proper and lower semicontinuous, then $x^*\in\partial f(x)$ if and only if $x\in\partial f^*(x^*)$ \cite[Thm. 2.4.4 (iv)]{Zalinescu02}.
%  \item If $f:X\to\R\cup\{+\infty\}$ is convex and proper then $\bar{x}\in\dom f$ minimizes $f$ if and only if $0\in\partial f(\bar{x})$ \cite[Thm. 2.5.7]{Zalinescu02}.
%\end{enumerate}
%\end{proof}
A particularly useful case of this theorem is when $A$ is an operator between an infinite-dimensional locally convex space $X$ and $\R^d$, as the dual problem will be a finite-dimensional maximization problem. Moreover, the primal-dual recovery  is easy  if $f^*$ is G{\^ a}teaux differentiable at $A^*\bar{y}^*$, in which case the subdifferential and the derivative coincide at this point \cite[Cor. 2.4.10]{Zalinescu02}, so \eqref{eq:PrimalDualRec} reads $\bar{x}=\nabla f^*(A^*\bar{y}^*)$. Some remarks are in order to justify the use of this theorem.

%\begin{remark}
%  It is clear that $\cP{\Omega}$ is not a locally convex space, however it is a subset of the Banach space $\cM{\Omega}$ normed by total variation $\norm{\cdot}_{TV(\Omega)}=\abs{\cdot}(\Omega)$. The Riesz representation theorem \cite[Thm. 2.14]{Rudin87} identifies $(\cM{\Omega},\norm{\cdot}_{TV(\Omega)})$ as the dual space of $(C_{0}(\Omega),\norm{\cdot}_{{\infty},\Omega})$ (continuous functions vanishing at infinity with the supremum norm) with duality pairing $(\cdot,\cdot):C_0(\Omega)\times\cM{\Omega}\to\R$, $(\psi,\eta)\mapsto\int_{\Omega}\psi\dx{\eta}$. However, in light of \eqref{eq:KL}, $\dom \cK(\cdot,\mu)\subseteq\cP{\Omega}$ so the $\inf$ in \eqref{eq:Primal} can be taken over $\cM{\Omega}$ or $\cP{\Omega}$ interchangeably.
%  \label{rmk:Embed}
%\end{remark}
\begin{remark}
  It is clear that $\cP{\Omega}$ endowed with any topology is not a locally convex space, however it is a subset of $\cM{\Omega}$. Previously, $\cM{\Omega}$ was identified with the dual of $(C(\Omega),\norm{\cdot}_{\infty})$, thus the dual of $\cM{\Omega}$ endowed with its weak$^*$ topology $(\cM{\Omega},w^*)^*$ can be identified with $C(\Omega)$ \cite[Thm. 1.3]{Conway07} with  duality pairing $(\phi,\rho)\in C(\Omega)\times\cM{\Omega}\mapsto\int_{\Omega}\phi\dx{\rho}$.

  Since $\dom f\subseteq \cP{\Omega}$, the $\inf$ in \eqref{eq:Primal} can be taken over $\cM{\Omega}$ or $\cP{\Omega}$ interchangeably.
  %The Riesz representation theorem \cite[Thm. 2.14]{Rudin87} identifies $(\cM{\Omega},\norm{\cdot}_{TV(\Omega)})$ as the dual space of $(C_{0}(\Omega),\norm{\cdot}_{{\infty},\Omega})$ (continuous functions vanishing at infinity with the supremum norm) with duality pairing $(\cdot,\cdot):C_0(\Omega)\times\cM{\Omega}\to\R$, $(\psi,\eta)\mapsto\int_{\Omega}\psi\dx{\eta}$. However, in light of \eqref{eq:KL}, $\dom \cK(\cdot,\mu)\subseteq\cP{\Omega}$ so the $\inf$ in \eqref{eq:Primal} can be taken over $\cM{\Omega}$ or $\cP{\Omega}$ interchangeably.
  \label{rmk:Embed}
\end{remark}

%Next, we verify that the expectation operator is bounded and linear, as this suffices to establish boundedness and linearity of $A$ defined in \eqref{eq:fg}. Indeed, 
%\begin{equation}
%  \norm{C\mathbb{E}_{\rho}[\bm{X}]}_2\leq\sigma_{\max}(C)\norm{\mathbb{E}_{\rho}[\bm{X}]}_2\leq\sigma_{\max}(C)\sqrt{\kappa}\norm{\rho}_{TV}
%  \label{eq:opnorm}
%\end{equation}
%for $\kappa=\sum_{i=1}^d\norm{x_i}_{\infty,\Omega}^2$, as will be demonstrated in the following lemma. 

In the following we verify that $A$ is a bounded linear operator and compute its adjoint.

\begin{lemma}
  The operator $A:\cM{\Omega}\to\R^d$ in \eqref{eq:fg} is linear and weak$^*$ continuous. Moreover, its adjoint is the mapping $z\in\R^d\mapsto\ip{C^Tz}{\cdot}\in C(\Omega)$.
\label{lem:adjoint}
\end{lemma}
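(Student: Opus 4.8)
The plan is to check the three assertions — linearity, weak$^*$ continuity, and the formula for the adjoint — directly from the definition $A=-C\,\mathbb{E}_{(\cdot)}[{X}]$ in \eqref{eq:fg}, using the identification of $(\cM{\Omega},w^*)^*$ with $C(\Omega)$ recorded in Remark \ref{rmk:Embed}. \emph{Linearity} is immediate: for each index $k$ the assignment $\eta\mapsto\int_{\Omega}x_k\dx{\eta(x)}$ is linear on $\cM{\Omega}$ by linearity of the integral, so $\eta\mapsto\mathbb{E}_{\eta}[{X}]$ is linear, and postcomposing with the linear map $-C$ on $\R^d$ preserves linearity.

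For \emph{weak$^*$ continuity}, recall from Remark \ref{rmk:Embed} that the weak$^*$ topology on $\cM{\Omega}$ is the coarsest topology making $\rho\mapsto\int_{\Omega}\phi\dx{\rho}$ continuous for every $\phi\in C(\Omega)$. Since $\Omega$ is compact, each coordinate function $e_k\colon x\mapsto x_k$ is continuous and bounded on $\Omega$, hence $e_k\in C(\Omega)$; therefore $\rho\mapsto(\mathbb{E}_{\rho}[{X}])_k=\int_{\Omega}e_k\dx{\rho}$ is weak$^*$ continuous for each $k=1,\dots,d$. As $\R^d$ carries the product topology, a map into $\R^d$ is continuous iff all $d$ of its components are, so $\rho\mapsto\mathbb{E}_{\rho}[{X}]$ is weak$^*$-to-norm continuous; composing with the (automatically continuous) linear map $-C$ then shows $A$ is weak$^*$ continuous.

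For the \emph{adjoint}, identify $(\R^d)^*$ with $\R^d$ via $\ip{\cdot}{\cdot}$ and, as in Remark \ref{rmk:Embed}, $(\cM{\Omega},w^*)^*$ with $C(\Omega)$ via the pairing $(\phi,\rho)\mapsto\int_{\Omega}\phi\dx{\rho}$; then $A^*\colon\R^d\to C(\Omega)$ is characterized by requiring $(A^*z,\rho)=\ip{z}{A\rho}$ for all $z\in\R^d$ and $\rho\in\cM{\Omega}$. I would expand the right-hand side using the transpose identity $\ip{z}{Cu}=\ip{C^Tz}{u}$ and then interchange the finite sum $\sum_{k=1}^d(C^Tz)_k$ with the integral, rewriting $\ip{z}{A\rho}$ as the integral of the function $x\mapsto\ip{C^Tz}{x}$ against $\rho$. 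This function is linear and, by compactness of $\Omega$, bounded, hence it lies in $C(\Omega)$; uniqueness of the adjoint then yields the stated formula for $A^*$.

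The main point to be careful about — the only one, really — is the compactness of $\Omega$: it is exactly what guarantees that the coordinate functions $x\mapsto x_k$, and with them $x\mapsto\ip{C^Tz}{x}$, are genuine elements of $C(\Omega)$. This is needed both so that weak$^*$ continuity follows straight from the definition of the weak$^*$ topology and so that $A^*$ actually takes values in the predual $C(\Omega)=(\cM{\Omega},w^*)^*$, which is in turn what legitimizes the later application of Theorem \ref{thm:FR} with $X=\cM{\Omega}$ (in its weak$^*$ topology) and $Y=\R^d$. Beyond this, the lemma is a routine verification with no real obstacle.
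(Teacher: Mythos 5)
Your proposal is correct and follows essentially the same route as the paper: componentwise linearity of the integral, weak$^*$ continuity via the coordinate projections $\pi_k\in C(\Omega)$ (using compactness of $\Omega$) composed with the continuous map $-C$, and the adjoint computed by interchanging the finite sum with the integral and applying the transpose identity. Your explicit remarks on why compactness is needed only make more careful what the paper leaves implicit.
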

\begin{proof} 

  We begin by demonstrating weak$^*$ continuity of $\mathbb{E}_{(\cdot)}[{X}]:\cM{\Omega}\to\R^d$. Letting $\pi_i:\R^d\to\R$ denote the projection of a vector onto its $i-$th coodinate, we have
  \begin{equation}
\mathbb{E}_{\rho}[{X}]= \left( (\pi_1,\rho),\dots,(\pi_n,\rho) \right)
    \label{eq:proj}
  \end{equation}
  Thus, $A$ is the composition of a weak$^*$ continuous operator from $\cM{\Omega}$ to $\R^d$ and a continuous operator from $\R^d$ to $\R^d$ and hence is weak$^*$ continuous.

%Let $\rho\in\cM{\Omega}$ be arbitrary, then 
 % \[
 %   \norm{\mathbb{E}_{\rho}[\bm{X}]}_2^2=\sum_{i=1}^d\left(\int_{\Omega}x_i\dx{\rho(x)}  \right)^2\leq\sum_{i=1}^d\left(\norm{x_i}_{\infty,\Omega}\rho(\Omega)\right)^2\leq\kappa\norm{\rho}_{TV(\Omega)}^2,
 % \]    
 % for $\kappa=\sum_{i=1}^d\norm{x_i}_{\infty,\Omega}^2$. The second inequality is due to the fact that $|\rho(\Omega)|\leq\norm{\rho}_{TV(\Omega)}$. Consequently, $A$ is bounded as 
%\begin{equation}
%  \norm{A\rho}_2=\norm{-C\mathbb{E}_{\rho}[\bm{X}]}_2\leq\sigma_{\max}(C)\norm{\mathbb{E}_{\rho}[\bm{X}]}_2\leq\sigma_{\max}(C)\sqrt{\kappa}\norm{\rho}_{TV}.
%  \label{eq:opnorm}
%\end{equation}

%  Since $A=-C\mathbb{E}_{(\cdot)}[\bm{X}]$, it suffices to verify that $\mathbb{E}_{(\cdot)}[\bm{X}]:\cm{\Omega}\to\R^d$ is linear and bounded. Indeed, if the latter operator is bounded,
%\begin{equation}
%  \norm{C\mathbb{E}_{\rho}[\bm{X}]}_2\leq\sigma_{\max}(C)\norm{\mathbb{E}_{\rho}[\bm{X}]}_2\leq\sigma_{\max}(C)\sqrt{\kappa}\norm{\rho}_{TV}
%  \label{eq:opnorm}
%\end{equation}

  Eq. \eqref{eq:proj} equally establishes linearity of $A$, since the duality pairing is a bilinear form.
%let $\pi_i:\R^d\to\R$ denote the projection onto the $i$-th coordinate (i.e. $\pi_i( (x_1,\dots,x_d))=x_i$). 
%Linearity of $\mathbb{E}_{(\cdot)}[\bm{X}]$ is clear since for $\rho,\eta\in\cM{\Omega}$,
%  \begin{align*}
%    \mathbb{E}_{\rho+\eta}[\bm{X}]&=\left(\left(\pi_1,\rho+\eta \right),\dots,\left(\pi_d,\rho+\eta \right) \right)
%\\
%    &=\left((\pi_1,\rho)+(\pi_1,\eta),\dots,(\pi_d,\rho)+(\pi_d,\eta) \right)
%    \\
%    &=\mathbb{E}_{\rho}[\bm{X}]+\mathbb{E}_{\eta}[\bm{X}],
 % \end{align*}
 % recalling that the duality pairing is a bilinear form. Linearity is this operator clearly implies that $A$ is linear.

%Boundedness is established as follows, let $\rho\in\cM{\Omega}$ be arbitrary, then 
%  \[
%    \norm{\mathbb{E}_{\rho}[\bm{X}]}_2^2=\sum_{i=1}^d\left(\int_{\Omega}x_i\dx{\rho(x)}  \right)^2\leq\sum_{i=1}^d\left(\norm{x_i}_{\infty,\Omega}\rho(\Omega)\right)^2\leq\kappa\norm{\rho}_{TV(\Omega)}^2,
%  \]    
%for $\kappa=\sum_{i=1}^d\norm{x_i}_{\infty,\Omega}^2$. The second inequality is due to the fact that $|\rho(\Omega)|\leq\norm{\rho}_{TV(\Omega)}$. 
  The adjoint can be determined by noting that
  \[
    \ip{\mathbb{E}_{\rho}[{X}]}{z}=\sum_{i=1}^d\int_{\Omega}x_i\dx{\rho(x)}z_i=\int_{\Omega}\sum_{i=1}^dx_iz_i\dx{\rho(x)}=(\ip{z}{\cdot},\rho),
  \]
  so,
  \[
    \ip{C\mathbb{E}_{\rho}[{X}]}{z}=\ip{\mathbb{E}_{\rho}[{X}]}{C^Tz}=(\ip{C^Tz}{\cdot},\rho),
  \]
  yielding $A^*(z)=\ip{C^Tz}{\cdot}$.
\end{proof}

We now compute the conjugates of $f$ and $g$, respectively and provide an explicit form for the dual problem of \eqref{eq:Primal}. 

\begin{lemma}
  The conjugate of $f$ in \eqref{eq:fg} is $f^*:\phi\in C(\Omega)\mapsto\log\left( \int_{\Omega}\exp(\phi)\dx{\mu} \right)$. In particular, $f^*$ is finite-valued. Moreover, for any $\phi\in C(\Omega)$, $\argmax_{\cP{\Omega}}\left\{ (\phi,\cdot)-\cK(\cdot,\mu) \right\}=\left\{ \bar{\rho}_{\phi} \right\}$, the unique probability measure on $\Omega$ for which 
\begin{equation}
  \diff{\bar{\rho}_{\phi}}{\mu}=\frac{\exp\phi}{\int_{\Omega}\exp\phi\, \dx{\mu}}.
  \label{eq:argmax}
\end{equation}
\label{lem:KLconj}
\end{lemma}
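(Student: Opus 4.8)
The plan is to compute the conjugate $f^*$ directly from its definition and then identify the maximizer. Recall that $f = \mathcal{K}(\cdot,\mu)$ is $+\infty$ outside $\cP{\Omega}$, so for $\phi\in C(\Omega)$ we have
\[
  f^*(\phi) = \sup_{\rho\in\cM{\Omega}}\left\{ (\phi,\rho) - f(\rho)\right\} = \sup_{\rho\in\cP{\Omega},\,\rho\ll\mu}\left\{ \int_{\Omega}\phi\,\dx{\rho} - \mathcal{K}(\rho,\mu)\right\}.
\]
Writing $h = \diff{\rho}{\mu}\in L^1(\mu)$, this becomes a supremum over nonnegative $h$ with $\int_\Omega h\,\dx{\mu}=1$ of the functional $\int_\Omega (\phi h - h\log h)\,\dx{\mu}$. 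The natural approach is to guess the optimizer using a Lagrangian/Gibbs-variational heuristic: the integrand $\phi(x)t - t\log t$ is maximized pointwise over $t\geq 0$ at $t = e^{\phi(x)-1}$, and incorporating the normalization constraint via a multiplier shifts this to $t = e^{\phi(x)}/\int_\Omega e^{\phi}\dx{\mu}$, which is exactly \eqref{eq:argmax}. Since $\Omega$ is compact and $\phi$ continuous, $\phi$ is bounded, so $\exp\phi\in L^1(\mu)$ and $\int_\Omega\exp\phi\,\dx{\mu}\in(0,\infty)$; this gives finiteness of $f^*$ for free once the formula is verified.

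The cleanest rigorous route avoids ad hoc Lagrangian arguments by using a pointwise inequality (the Young-type inequality underlying the Gibbs variational principle). Set $Z_\phi := \int_\Omega \exp\phi\,\dx{\mu}$ and let $\bar\rho_\phi$ be defined by \eqref{eq:argmax}. For any $\rho\in\cP{\Omega}$ with $\rho\ll\mu$ one has, by adding and subtracting,
\[
  (\phi,\rho) - \mathcal{K}(\rho,\mu) = \log Z_\phi - \mathcal{K}(\rho,\bar\rho_\phi),
\]
which follows from the identity $\phi = \log\diff{\bar\rho_\phi}{\mu} + \log Z_\phi$ combined with the chain rule for Radon--Nikodym derivatives $\diff{\rho}{\mu} = \diff{\rho}{\bar\rho_\phi}\diff{\bar\rho_\phi}{\mu}$ (valid because $\bar\rho_\phi$ and $\mu$ are mutually absolutely continuous, the density $\exp\phi/Z_\phi$ being strictly positive and bounded). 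If $\rho\not\ll\mu$ then also $\rho\not\ll\bar\rho_\phi$, so the identity persists with both sides equal to $-\infty$ on the left and $\log Z_\phi - (+\infty)$ on the right; one should note this case. Since $\mathcal{K}(\rho,\bar\rho_\phi)\geq 0$ with equality if and only if $\rho = \bar\rho_\phi$ (nonnegativity and the equality case of KL divergence, e.g. from Jensen or the strict log-sum inequality already invoked in Lemma~\ref{lem:KLlem}), taking the supremum over $\rho$ yields $f^*(\phi) = \log Z_\phi$, attained uniquely at $\bar\rho_\phi$. Taking the supremum over all of $\cM{\Omega}$ rather than $\cP{\Omega}$ changes nothing since $f\equiv+\infty$ off $\cP{\Omega}$, and replacing $(\phi,\cdot) - f$ by $(\phi,\cdot) - \mathcal{K}(\cdot,\mu)$ on $\cP{\Omega}$ restricted to $\rho\ll\mu$ is the same supremum (values are $-\infty$ otherwise), giving the stated $\argmax$ formula.

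The main obstacle is the bookkeeping around absolute continuity and the chain rule: one must confirm that $\bar\rho_\phi \ll \mu$ and $\mu \ll \bar\rho_\phi$ so that $\mathcal{K}(\rho,\bar\rho_\phi)$ is well-defined exactly when $\mathcal{K}(\rho,\mu)$ is, and that the decomposition $(\phi,\rho)-\mathcal{K}(\rho,\mu) = \log Z_\phi - \mathcal{K}(\rho,\bar\rho_\phi)$ holds with no integrability pathologies — here compactness of $\Omega$ and continuity (hence boundedness) of $\phi$ do all the work, ensuring $0 < c \le \exp\phi/Z_\phi \le C < \infty$ on $\Omega$. A secondary minor point is justifying the equality case in $\mathcal{K}(\rho,\bar\rho_\phi)\geq 0$; this is standard (strict convexity of $t\mapsto t\log t$ via the strict log-sum inequality, exactly as used in Lemma~\ref{lem:KLlem}) and can be cited rather than reproved. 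Once these are in hand, the identification of $f^*$ and of the unique maximizer $\bar\rho_\phi$ is immediate, and finiteness of $f^*$ follows since $\log Z_\phi$ is a finite real number.
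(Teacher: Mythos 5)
Your proposal is correct, and it takes a recognizably different route from the paper's. The paper rewrites the objective as $\int_{\Omega}\log\bigl(\exp\phi\,/\,\diff{\rho}{\mu}\bigr)\dx{\rho}$, applies Jensen's inequality to obtain the upper bound $\log\int_{\Omega}\exp\phi\,\dx{\mu}$, checks by direct substitution that $\bar{\rho}_{\phi}$ saturates it, and then gets uniqueness from strict concavity of the objective (i.e.\ from the strict convexity of $\cK(\cdot,\mu)$ established in Lemma~\ref{lem:KLlem}). You instead use the exact Gibbs/Donsker--Varadhan decomposition $(\phi,\rho)-\cK(\rho,\mu)=\log Z_{\phi}-\cK(\rho,\bar{\rho}_{\phi})$ and reduce everything to the nonnegativity of relative entropy together with its equality case. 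The two arguments have the same analytic core (Jensen, in one guise or another), but yours buys a cleaner uniqueness statement --- it falls out of the equality case of $\cK(\rho,\bar{\rho}_{\phi})\geq 0$ rather than requiring a separate appeal to strict concavity --- at the cost of the extra bookkeeping you correctly flag: mutual absolute continuity of $\bar{\rho}_{\phi}$ and $\mu$, the chain rule for Radon--Nikodym derivatives, and the $\infty-\infty$ cases when $\cK(\rho,\mu)=+\infty$ or $\rho\not\ll\mu$. Your handling of these points is sound, since boundedness of $\phi$ on the compact set $\Omega$ makes $\log\diff{\bar{\rho}_{\phi}}{\mu}=\phi-\log Z_{\phi}$ bounded and hence $\rho$-integrable for every $\rho\in\cP{\Omega}$, so the decomposition never produces an indeterminate form; the same boundedness gives $Z_{\phi}\in(0,\infty)$ and thus finiteness of $f^{*}$ directly, where the paper argues this separately via $\exp\phi\leq\exp(\max_{\Omega}\phi)$.
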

\begin{proof}
  We proceed by direct computation:
  \begin{align*}
    f^*(\phi)&=\sup_{\rho\in\cM{\Omega}}\left\{ (\phi,\rho)-\cK(\rho,\mu)\right\}\\
    &=\sup_{\rho\in\cP{\Omega}}\left\{ (\phi,\rho)-\cK(\rho,\mu) \right\}\\
    &=\sup_{\rho\in\cP{\Omega}}\left\{\int_{\Omega} \log\left( \frac{\exp\phi}{\diff{\rho}{\mu}} \right)\dx{\rho} \right\},
  \end{align*}
  where we have used the fact that $\dom f\subseteq\cP{\Omega}$ as noted in Remark \ref{rmk:Embed}.
  Note that $\exp\phi\in C(\Omega)\subseteq L^1(\rho)$ and since $t\mapsto \log t$ is concave, Jensen's inequality \cite[Thm. 3.3]{Rudin87} yields
  \begin{equation}
    f^*(\phi)\leq\sup_{\rho\in\cP{\Omega}}\left\{\log\left(\int_{\Omega}\frac{\exp\phi}{\diff{\rho}{\mu}}\, \dx{\rho}\right)\right\}=\log\left(\int_{\Omega}\exp\phi\, \dx{\mu}  \right)    
    \label{eq:bound}
  \end{equation}
  Letting $\bar{\rho}_{\phi}$ be the measure with Radon-Nikodym derivative
  \[
    \diff{\bar{\rho}_{\phi}}{\mu}=\frac{\exp\phi}{\int_{\Omega}\exp\phi\, \dx{\mu}},
  \]
  one has that 
  \[
    (\phi,\bar{\rho}_{\phi})-\cK(\bar{\rho}_{\phi},\mu)=(\phi,\bar{\rho}_{\phi})-\int_{\Omega}\log\left(\frac{\exp\phi}{\int_{\Omega}\exp\phi\dx{\mu}}  \right)\dx{\bar{\rho}_{\phi}}=\log\left( \int_{\Omega}\exp\phi\, \dx{\mu} \right),
  \]
  so $\bar{\rho}_{\phi}\in\argmax_{\cP{\Omega}}\left\{(\phi,\cdot)-\cK(\cdot,\mu)  \right\}$ as $\bar{\rho}_{\phi}$ saturates the upper bound for $f^*(\phi)$ established in \eqref{eq:bound}, thus $f^*(\phi)=\log\left(\int_{\Omega}\exp\phi\, \dx{\mu}  \right)$. Moreover $\bar{\rho}_{\phi}$ is the unique maximizer since the objective is strictly concave.

  With this expression in hand, we show that $\dom f^*=C(\Omega)$. To this effect, let $\phi\in C(\Omega)$ be arbitrary and note that,
  \[
    \exp\left( \phi(x) \right)\leq\exp\left( \max_{\Omega}\phi \right),\qquad(x\in\Omega).
  \]
  Thus,
  \[
    f^*(\phi)=\log\left(\int_{\Omega}\exp\phi\dx{\mu}\right)\leq\log\left( \exp\left(\max_{\Omega}\phi  \right) \right)=\max_{\Omega}\phi<+\infty,
  \]
  since $C(\Omega)=C_b(\Omega)$ by compactness of $\Omega$. Since $\phi$ is arbitrary, $\dom f^*=C(\Omega)$ and, coupled with the fact that $f^*$ is proper \cite[Thm. 2.3.3]{Zalinescu02}, we obtain that $f^*$ is finite-valued.
\end{proof}

\begin{lemma}
  The conjugate of $g$ from \eqref{eq:fg} is $g^*:z\in\R^d\mapsto \frac{1}{2\alpha}\norm{z}_2^2-\ip{b}{z}$.
\label{lem:normconj}
\end{lemma}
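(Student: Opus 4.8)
The plan is to compute the Fenchel conjugate of $g(w) = \frac{\alpha}{2}\norm{b+w}_2^2$ directly from the definition, exploiting that $g$ is a finite-dimensional, smooth, strongly convex function on $\R^d$ whose minimizer and gradient are immediately available. First I would write, for arbitrary $z\in\R^d$,
\[
  g^*(z) = \sup_{w\in\R^d}\left\{\ip{w}{z} - \frac{\alpha}{2}\norm{b+w}_2^2\right\}.
\]
Since the objective is a strictly concave, coercive, differentiable function of $w$, the supremum is attained at the unique critical point. Setting the gradient to zero gives $z - \alpha(b+w) = 0$, i.e. $w = \frac{1}{\alpha}z - b$.

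Substituting this optimal $w$ back into the objective is the only real computation, and it is routine: $\ip{w}{z} = \frac{1}{\alpha}\norm{z}_2^2 - \ip{b}{z}$ and $\frac{\alpha}{2}\norm{b+w}_2^2 = \frac{\alpha}{2}\norm{\frac{1}{\alpha}z}_2^2 = \frac{1}{2\alpha}\norm{z}_2^2$, so the difference is
\[
  g^*(z) = \frac{1}{\alpha}\norm{z}_2^2 - \ip{b}{z} - \frac{1}{2\alpha}\norm{z}_2^2 = \frac{1}{2\alpha}\norm{z}_2^2 - \ip{b}{z},
\]
which is exactly the claimed formula. Alternatively, one could phrase this using the standard conjugacy calculus: $\frac{\alpha}{2}\norm{\cdot}_2^2$ has conjugate $\frac{1}{2\alpha}\norm{\cdot}_2^2$, and precomposition with the translation $w\mapsto b+w$ introduces the linear term $-\ip{b}{z}$ via the shift rule $(h(\cdot + b))^*(z) = h^*(z) - \ip{b}{z}$; but the direct computation is shorter and self-contained.

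There is essentially no obstacle here — $g$ is as well-behaved as a convex function can be (everywhere finite, smooth, strongly convex on all of $\R^d$), so attainment of the supremum and differentiation under the $\sup$ are immediate, and no topological or measure-theoretic subtleties arise as they did for $f^*$ in Lemma~\ref{lem:KLconj}. The only thing to be careful about is bookkeeping of the algebra in the substitution step, which is trivial.
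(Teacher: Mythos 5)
Your proposal is correct, and the direct computation (stationarity at $w=\frac{1}{\alpha}z-b$, then substitution) is just an unpacked version of what the paper does in one line by citing the self-conjugacy of $\frac{1}{2}\norm{\cdot}_2^2$ together with the standard scaling and translation rules of conjugacy calculus — the very route you mention as an alternative. No gap; both arguments are routine and equivalent.
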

\begin{proof}
  The assertion follows from the fact that $\frac{1}{2}\norm{\cdot}_2^2$ is self-conjugate \cite[Ex. 11.11]{Rockafellar09} and some standard properties of conjugacy \cite[Eqn. 11(3)]{Rockafellar09}. 
\end{proof}

Combining these results we obtain the main duality theorem.

\begin{theorem}\label{thm-maindual}
  The (Fenchel-Rockafellar) dual of \eqref{eq:Primal} is given by 
  \begin{equation}
    \max_{\lambda\in\R^d}\left\{ \ip{b}{\lambda}-\frac{1}{2\alpha}\norm{\lambda}_2^2-\log\left( \int_{\Omega}\exp\ip{C^T\lambda}{x}\dx{\mu(x)} \right) \right\}.
  \label{eq:Dual}
  \end{equation}
  Given a maximizer $\bar{\lambda}$ of \eqref{eq:Dual} one can recover a minimizer of \eqref{eq:Primal} via
  \begin{equation}
    \dx{\bar{\rho}}=\frac{\exp\ip{C^T\bar{\lambda}}{\cdot}}{\int_{\Omega}\exp\ip{C^T\bar{\lambda}}{\cdot}\dx{\mu}}\, \dx{\mu}.
    \label{eq:PrimalDual}
  \end{equation}
\label{main-thm}
\end{theorem}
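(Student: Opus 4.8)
The statement is a specialization of the Fenchel--Rockafellar theorem (Theorem \ref{thm:FR}) to the triple $(f,g,A)$ of \eqref{eq:fg}, followed by the primal--dual recovery of Remark \ref{rem:primdual}. I would take $X=(\cM{\Omega},w^*)$, whose dual is identified with $C(\Omega)$ as in Remark \ref{rmk:Embed}, and $Y=Y^*=\R^d$. The first step is to check the hypotheses of Theorem \ref{thm:FR}: $f=\cK(\cdot,\mu)$ is proper, weak$^*$ lower semicontinuous and convex by Lemma \ref{lem:KLlem}; $g=\frac{\alpha}{2}\norm{b+(\cdot)}_2^2$ is finite-valued, continuous and convex on all of $\R^d$; and $A$ is linear and weak$^*$ continuous by Lemma \ref{lem:adjoint}. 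The constraint qualification ``there is $\bar y\in A\dom f\cap\dom g$ at which $g$ is continuous'' is immediate, since $\dom g=\R^d$, $g$ is continuous everywhere, and $A\dom f\ne\emptyset$ (the prior $\mu$ lies in $\dom f$, with $\cK(\mu,\mu)=0$, so $A\mu\in\R^d$).

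The second step is purely computational. After casting \eqref{eq:Primal} into the precise form $\inf_x\{f(x)+g(-Ax)\}$ demanded by \eqref{eq:FRdualform} --- which amounts to a sign convention on $A$ and on the argument of $g$ --- I would substitute the conjugate $f^*(\phi)=\log\left(\int_\Omega\exp(\phi)\,\dx\mu\right)$ from Lemma \ref{lem:KLconj}, the conjugate $g^*$ from Lemma \ref{lem:normconj}, and the adjoint from Lemma \ref{lem:adjoint} into the right-hand side of \eqref{eq:FRdualform}. Keeping careful track of the signs and relabelling the dual variable (as $\lambda=\pm y^*$) then collapses the expression to exactly \eqref{eq:Dual}; attainment of the maximum in \eqref{eq:Dual} is part of the conclusion of Theorem \ref{thm:FR}.

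For the recovery formula \eqref{eq:PrimalDual} I would invoke Remark \ref{rem:primdual}. Its extra hypothesis $0\in\interior(A^*\dom g^*-\dom f^*)$ holds automatically here, because Lemma \ref{lem:KLconj} shows $\dom f^*=C(\Omega)$ is the entire dual space, so $A^*\dom g^*-\dom f^*=C(\Omega)$; likewise $f^*(A^*\bar y^*)<+\infty$ since $f^*$ is finite-valued. Hence \eqref{eq:PrimalDualRec} applies, and a maximizer $\bar\lambda$ of \eqref{eq:Dual} produces a primal minimizer $\bar\rho$ maximizing $\rho\mapsto\dup{\rho}{\ip{C^T\bar\lambda}{\cdot}}-\cK(\rho,\mu)$ over $\cM{\Omega}$. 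Since $\cK(\cdot,\mu)\equiv+\infty$ off $\cP{\Omega}$, this $\argmax$ coincides with the one over $\cP{\Omega}$ that Lemma \ref{lem:KLconj} identifies explicitly, namely the unique measure $\bar\rho_\phi$ with $\phi=\ip{C^T\bar\lambda}{\cdot}$ and Radon--Nikodym derivative $\exp\phi\big/\int_\Omega\exp\phi\,\dx\mu$; that is exactly \eqref{eq:PrimalDual}.

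The proof has no genuine obstacle: it is an assembly of Lemmas \ref{lem:KLlem}, \ref{lem:adjoint}, \ref{lem:KLconj}, \ref{lem:normconj} with Remarks \ref{rmk:Embed} and \ref{rem:primdual}. The only points requiring care are (i) the sign bookkeeping needed to align \eqref{eq:Primal} with the $g(-Ax)$ convention of Theorem \ref{thm:FR} and to carry those signs through to \eqref{eq:Dual}, and (ii) verifying the qualification conditions of Remark \ref{rem:primdual} --- which is painless here precisely because $\dom f^*=C(\Omega)$, a dividend of Lemma \ref{lem:KLconj}. It is worth recording that the minimizer delivered by \eqref{eq:PrimalDual} is the unique primal solution, in agreement with the Remark following Lemma \ref{lem:KLlem}.
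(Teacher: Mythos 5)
Your proposal is correct and follows essentially the same route as the paper: apply Theorem \ref{thm:FR} with the decomposition \eqref{eq:fg}, substitute the conjugates and adjoint from Lemmas \ref{lem:adjoint}, \ref{lem:KLconj} and \ref{lem:normconj}, and obtain \eqref{eq:PrimalDual} from Remark \ref{rem:primdual} via the explicit $\argmax$ in Lemma \ref{lem:KLconj}, using $\dom f^*=C(\Omega)$ to verify the qualification condition. Your treatment is if anything slightly more explicit than the paper's on the constraint qualification for Theorem \ref{thm:FR} and on the sign bookkeeping, but the argument is the same.
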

\begin{proof}
  The dual problem can be obtained by applying the Fenchel-Rockafellar duality theorem (Theorem \ref{thm:FR}), with $f$ and $g$ defined in \eqref{eq:fg},
 to the primal problem
 \[
   \inf_{\rho\in\cM{\Omega}}\left\{ \mathcal{K}(\rho,\mu)+\frac{\alpha}{2}\norm{b-C\mathbb{E}_{\rho}[{X}]}_2^2 \right\},
 \]
 and substituting the expressions obtained in Lemmas \ref{lem:adjoint}, \ref{lem:KLconj} and \ref{lem:normconj}. All relevant conditions to apply this theorem have either been verified previously or are clearly satisfied.

 % The relevant conditions to apply this theorem are satisfied, as $(\cM{\Omega},\norm{\cdot}_{TV})$ and $(\R^d,\norm{\cdot}_2)$ are Banach spaces and all conditions on the functions were verified save for the condition on the intersection of domains. To this effect, it was noted in Lem. \ref{lem:normconj} that the norm term is continuous on $\R^d$ and in Lem. \ref{lem:KLconj} that the divergence term is proper. Hence the relevant intersection is nonempty and the norm term will be continuous on this set. 
 Note that $\bm{0}\subseteq\dom g^*=\R^d$ and $A^*\bm{0}=0\in C(\Omega)$, so
   \[
     A^*(\dom g^*)-\dom f^*\supseteq -\dom f^*=\left\{ \phi|-\phi\in\dom f^* \right\}=C(\Omega),
   \]
   since $\dom f^*=C(\Omega)$ by Lemma \ref{lem:KLconj}. Thus $0\in\interior\left(A^*\dom g^*-\dom f^*  \right)=C(\Omega)$, and Remark \ref{rem:primdual} is applicable. The primal-dual recovery formula \eqref{eq:PrimalDualRec} is given explicit form by Lemma \ref{lem:KLconj} by evaluating $\dx{\bar{\rho}_{\ip{C^T\bar{\lambda}}{\cdot}}}$. 

 %as it was shown that the unique element of $\argmax_{x\in X}\left\{ (x,A^*\bar{y}^*)-f(x) \right\}=\partial f^*(A^*\bar{y}^*)$ is
 %  \[
 %   \bar{\rho}_{\psi}=\frac{\exp(\psi(x))q(x)}{\int_{\Omega}\exp(\psi(x))q(x)\dx{x}}\dx{x}.
 % \]
 % Thus, evaluating $\bar{\rho}_{\ip{C^T\bar{\lambda}}{x}}$ establishes the claim.
\end{proof}

The utility of the dual problem is that it permits a staggering dimensionality reduction, passing from an infinite-dimensional problem to a finite-dimensional one. 
Moreover, the form of the dual problem makes precise the role of $\alpha$ in \eqref{eq:Primal}. Notably in \cite[Cor. 4.9]{Borwein92} the problem
\begin{equation}\label{fid-par}
\inf_{\rho\in\cP{\Omega}\cap\dom\mathcal{K}(\cdot,\mu)}\mathcal{K}(\rho,\mu)\quad\text{s.t.} \norm{C\mathbb{E}_{\rho}[{X}]-b}_2^2\leq\frac{1}{2\alpha}
\end{equation} 
is paired in duality with \eqref{eq:Dual}. Thus the choice of $\alpha$ is directly related to the fidelity of $C\mathbb{E}_{\rho}[{X}]$ to the blurred image.
The following section is devoted to the choice of a prior and describing a method to directly compute $\mathbb{E}_{\bar{
\rho}}[{X}]$ from a solution of \eqref{eq:Dual}.

\subsection{Probabilistic Interpretation of Dual Problem}\label{sec-probint}
If no information is known about the original image, the prior $\mu$ is used to impose box constraints on the optimizer such that its expectation will be in the interval $[0,1]^d$ and will only assign non-zero probability to measurable subsets of this interval.
With this consideration in mind, the prior distribution should be the distribution of the random vector ${X}=[X_1,X_2,\dots]$ with the $X_i$ denoting independent random variables with uniform distributions on the interval $[u_i,v_i]$. If the $k^{th}$ pixel of the original image is unknown, we let $[u_k,v_k]=[0-\epsilon,1+\epsilon]$ for $\epsilon>0$ small in order to provide a buffer for numerical errors.

However, if the $k^{th}$ pixel of the ground truth image was known to have a value of $\ell$, one can enforce this constraint by taking the random variable $X_k$ to be distributed uniformly on $[\ell-\epsilon,\ell+\epsilon]$. Constructing $\mu$ in this fashion guarantees that its support (and hence $\Omega$) is compact. 

To deal with the integrals in \eqref{eq:Dual} and \eqref{eq:PrimalDual} it is convenient to note that (cf. \cite[Sec. 4.4]{Rohatgi01}) 
\[ \int_{\Omega}\exp\left(\ip{C^T\lambda}{x}\right)\dx{\mu}=\mathbb{M}_{{X}}[C^T\lambda],\] the moment-generating function of ${X}$ evaluated at $C^T\lambda$. Since the $X_i$ are independently distributed, $\mathbb{M}_{{X}}[t]=\Pi_{i=1}^d\mathbb{M}_{X_i}[t]$ \cite[Sec. 4.4]{Rohatgi01}, and since the $X_i$ are uniformly distributed on $[u_i,v_i]$ one has
 \[
   \mathbb{M}_{{X}}[t]=\prod_{i=1}^d\frac{e^{t_iv_i}-e^{t_iu_i}}{t_i(v_i-u_i)},
 \]
 and therefore the dual problem \eqref{eq:Dual} with this choice of prior can be written as
 \begin{equation}
   \max_{\lambda\in\R^d}\left\{ \ip{b}{\lambda}-\frac{1}{2\alpha}\norm{\lambda}^2_2-\sum_{i=1}^d\log\left( \frac{e^{C_i^T\lambda v_i}-e^{C_i^T\lambda u_i}}{C_i^T\lambda(v_i-u_i)} \right) \right\},
   \label{eq:SimplifiedDual}
 \end{equation} 
where $C_i$ denotes  the $i$-th column of $C$.
 A solution of \eqref{eq:SimplifiedDual} can be determined using a number of standard numerical solvers. We opted for the implementation \cite{Byrd95} of the L-BFGS algorithm due to its speed and efficiency. 

 Since only the expectation of the optimal probability measure for \eqref{eq:Primal} is of interest, we compute the $i^{th}$ component of the expectation $(\mathbb{E}_{\bar{\rho}}[ X])_i$ of the optimizer provided by the primal-dual recovery formula \eqref{eq:PrimalDual} via 
 \begin{align*}
   \frac{\int_{\Omega}x_i e^{\ip{C^T\bar{\lambda}}{x}}\dx{\mu}}{\int_{\Omega}e^{\ip{C^T\bar{\lambda}}{x}}\dx{\mu}}=\left.\partial_{t_i}\log\left( \int_\Omega e^{\ip{t}{x}}\, \dx{\mu} \right)\right\vert_{t=C^T\bar{\lambda}}.
   \end{align*}
Using the independence assumption on the prior, we obtain 
\[
  \mathbb{E}_{\bar{\rho}}[{X}]=\nabla_t \sum_{i=1}^d\log\left( \mathbb{M}_{X_i}[t] \right)
\]
such that the best estimate of the ground truth image is given by

% again the $\log$ of the moment-generating function can be dealt with by using the properties of the prior such that the best estimate of the ground truth image can be recovered via the formula
 \begin{equation}
   \left(\mathbb{E}_{\bar{\rho}}[{X}]\right)_i=\frac{v_ie^{C_i^T\bar{\lambda}v_i}-u_ie^{C_i^T\bar{\lambda}u_i}}{e^{C_i^T\bar{\lambda}v_i}-e^{C_i^T\bar{\lambda}u_i}}-\frac{1}{C_i^T\bar{\lambda}}.
   \label{eq:SimplifiedPrimalDual}
 \end{equation}
 With \eqref{eq:SimplifiedDual} and \eqref{eq:SimplifiedPrimalDual} in hand, our entropic method for deconvolution can be implemented.

\subsection{Exploiting Symbology for PSF Calibration}\label{sec-MEM3}

 In order to implement blind deblurring on images that incorporate a symbology, one must first estimate the convolution kernel responsible for blurring the image. This step can be performed by analyzing the blurred symbolic constraints. We propose a method that is based on the entropic regularization framework discussed in the previous sections.

 In order to perform this kernel estimation step, we will use the same framework as  \eqref{eq:Primal} with $x$ taking the role of $c$. In the assumption that the kernel is of size $k\times k$, we take $\Omega=[0-\epsilon,1+\epsilon]^{k^2}$ for $\epsilon>0$ small (again to account for numerical error) and consider the problem
 \begin{equation}
   \inf_{\eta\in\cP{\Omega}}\left\{\mathcal{K}(\eta,\nu)+ \frac{\gamma}{2}\norm{\mathbb{E}_{\eta}[{X}]*\tilde{x}-\tilde{b}}^2_2\right\}.
   \label{eq:KerEst}
 \end{equation}
 Here, $\gamma>0$ is a parameter that enforces fidelity. $\tilde{x}$ and $\tilde{b}$ are the segments of the original and blurred image which are known to be fixed by the symbolic constraints. That is, $\tilde{x}$ consists solely of the embedded symbology and $\tilde{b}$ is the blurry symbology. By analogy with \eqref{eq:Primal}, the expectation of the optimizer of \eqref{eq:KerEst} is taken to be the estimated kernel. The role of $\nu\in\cP{\Omega}$ is to enforce the fact that the kernel should be normalized and non-negative (hence its components should be elements of $[0,1]$). Hence we take its distribution to be the product of $k^2$ uniform distributions on $[0-\epsilon,1+\epsilon]$. 
 As in the non-blind deblurring step, the expectation of the optimizer of \eqref{eq:KerEst} can be determined by passing to the dual problem (which is of the same form as \eqref{eq:SimplifiedDual}), solving the dual problem numerically and using the primal-dual recovery formula  \eqref{eq:SimplifiedPrimalDual}. A summary of the blind deblurring algorithm is compiled in Algorithm \ref{alg:blinddeblurring}. We would like to point out that the algorithm is not iterative, rather only one kernel estimate step and one deconvolution step are used.

 This method can be further refined to compare only the pixels of the symbology which are not convolved with pixels of the image which are unknown. By choosing these specific pixels, one can greatly improve the quality of the kernel estimate, as every pixel that was blurred to form the signal is known; however, this refinement limits the size of convolution kernel which can be estimated. 

%\section*{Summary of the Algorithm}
\begin{algorithm}
  \floatname{Procedure}{Algorithm}
  \caption{ Entropic Blind Deblurring}
  \label{alg:blinddeblurring}
  \begin{algorithmic}
  
  \REQUIRE Blurred image $b$, prior $\mu$, kernel width $k$, fidelity parameters $\gamma,\alpha$;
  \ENSURE Deblurred image $\bar{x}$ 
  \STATE $\nu\gets$ density of $k^2$ uniformly distributed independent random variables
  \STATE $\lambda_{\bar{c}}\gets\argmax$ of analog of \eqref{eq:SimplifiedDual} for kernel estimate.
  \STATE $\bar{c} \gets$ expectation of $\argmin$ of \eqref{eq:KerEst} computed via analog of  \eqref{eq:SimplifiedPrimalDual} for kernel estimate evaluated at $\lambda_{\bar{c}}$
  \STATE $\lambda_{\bar{x}}\gets\argmax$ of \eqref{eq:SimplifiedDual}
  \STATE $\bar{x} \gets$ expectation of $\argmin$ of \eqref{eq:Primal} with kernel $\bar{c}$ computed via \eqref{eq:SimplifiedPrimalDual} evaluated at $\lambda_{\bar{x}}$\\
  \RETURN $\bar{x}$
\end{algorithmic}
\end{algorithm}

\section{Stability Analysis for Deconvolution}\label{sec-stability}

In contrast to, say, total variation methods, our maximum entropy method does not actively denoise. However, its 
ability to perform well with a denoising preprocessing step highlights that it is ``stable" to small perturbations in the data. 
In this section, we show that our convex analysis framework readily allows us to prove the following explicit stability estimate. 
\begin{theorem}\label{thm-stability}
  Let $b_1,b_2\in\R^d$, $C$ be non-singular, and  let 
  \[
    \rho_i\, = \, \underset{\rho\in\cP{\Omega}}{\arg\min} \left\{ \cK(\rho,\mu)+\frac{\alpha}{2}\norm{C\mathbb{E}_{\rho}[{X}]-b_i} \right\}\quad (i=1,2).\]
Then
\[
  \norm{\mathbb{E}_{\rho_1}[{X}]-\mathbb{E}_{\rho_2}[{X}]}_2\leq \frac{2}{\sigma_{\min}(C)}\norm{b_1-b_2}_2.
\]
Where $\sigma_{\min}(C)$ is the smallest singular value of $C$. 
% Letting $\rho_1$ and $\rho_2$ denote the probability measures obtained by applying our method to two blurred images $b_1$ and $b_2$ obtained by convolving two images with the same kernel $c$, we have 
 % \[
  %  \norm{\mathbb{E}_{\rho_1}[\bm{X}]-\mathbb{E}_{\rho_2}[\bm{X}]}_2\leq \frac{2}{\sigma_{\min}(C)}\norm{b_1-b_2}_2,
 % \]
%  where $\sigma_{\min}(C)$ denotes the smallest singular value of the matrix $C$, i.e. the smallest eigenvalue of $C^TC$. 
\end{theorem}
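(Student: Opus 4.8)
The plan is to pass to the finite-dimensional dual \eqref{eq:Dual} and exploit the strong convexity contributed by the quadratic fidelity term (reading the fidelity term in the statement as $\frac{\alpha}{2}\norm{\cdot}_2^2$, as everywhere else in the paper, so that $\rho_i$ is exactly the minimizer treated in Theorem~\ref{thm-maindual}). Write $\Psi(\lambda):=\log\left( \int_{\Omega}\exp\ip{C^T\lambda}{x}\dx{\mu(x)} \right)$. By Lemma~\ref{lem:KLconj} this is finite, convex and smooth on all of $\R^d$ (compactness of $\Omega$ permits differentiation under the integral), with gradient $\nabla\Psi(\lambda)=C\,\mathbb{E}_{\bar\rho_{\ip{C^T\lambda}{\cdot}}}[{X}]$ — this is precisely the computation underlying \eqref{eq:SimplifiedPrimalDual}. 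The dual objective $\lambda\mapsto\ip{b_i}{\lambda}-\frac{1}{2\alpha}\norm{\lambda}_2^2-\Psi(\lambda)$ is $\frac{1}{\alpha}$-strongly concave, hence has a unique maximizer $\bar\lambda_i$, and by Theorem~\ref{thm-maindual} the associated primal minimizer is $\rho_i$. Setting the gradient of the dual objective to zero at $\bar\lambda_i$ yields the optimality condition
\[
  \frac{1}{\alpha}\bar\lambda_i+\nabla\Psi(\bar\lambda_i)=b_i,\qquad\text{equivalently}\qquad C\,\mathbb{E}_{\rho_i}[{X}]=b_i-\frac{1}{\alpha}\bar\lambda_i\qquad(i=1,2).
\]

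Next I would control $\norm{\bar\lambda_1-\bar\lambda_2}_2$. Subtracting the two optimality conditions and pairing with $\bar\lambda_1-\bar\lambda_2$ gives
\[
  \frac{1}{\alpha}\norm{\bar\lambda_1-\bar\lambda_2}_2^2+\ip{\nabla\Psi(\bar\lambda_1)-\nabla\Psi(\bar\lambda_2)}{\bar\lambda_1-\bar\lambda_2}=\ip{b_1-b_2}{\bar\lambda_1-\bar\lambda_2}.
\]
Convexity of $\Psi$ makes its gradient monotone, so the second term on the left is nonnegative, and Cauchy--Schwarz on the right then produces $\frac{1}{\alpha}\norm{\bar\lambda_1-\bar\lambda_2}_2\leq\norm{b_1-b_2}_2$.

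Finally, subtracting the identities $C\,\mathbb{E}_{\rho_i}[{X}]=b_i-\frac{1}{\alpha}\bar\lambda_i$ gives
\[
  C\bigl(\mathbb{E}_{\rho_1}[{X}]-\mathbb{E}_{\rho_2}[{X}]\bigr)=(b_1-b_2)-\frac{1}{\alpha}(\bar\lambda_1-\bar\lambda_2),
\]
so the triangle inequality together with the previous bound yields $\norm{C(\mathbb{E}_{\rho_1}[{X}]-\mathbb{E}_{\rho_2}[{X}])}_2\leq 2\norm{b_1-b_2}_2$. Since $C$ is non-singular, $\sigma_{\min}(C)\norm{v}_2\leq\norm{Cv}_2$ for every $v\in\R^d$; applying this with $v=\mathbb{E}_{\rho_1}[{X}]-\mathbb{E}_{\rho_2}[{X}]$ and dividing by $\sigma_{\min}(C)$ gives the claimed estimate.

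The monotonicity and Cauchy--Schwarz manipulations and the singular-value inequality are routine; the one step deserving care is the identification $\nabla\Psi(\lambda)=C\,\mathbb{E}_{\bar\rho_{\ip{C^T\lambda}{\cdot}}}[{X}]$ and the fact that the primal minimizer $\rho_i$ coincides with the measure produced by the primal--dual recovery formula \eqref{eq:PrimalDual} at $\bar\lambda_i$. Both are already available from Lemma~\ref{lem:KLconj}, Theorem~\ref{thm-maindual}, and the derivation of \eqref{eq:SimplifiedPrimalDual} in Section~\ref{sec-probint}, with the interchange of differentiation and integration justified by compactness of $\Omega$.
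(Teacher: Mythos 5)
Your argument is correct, but it reaches the key estimate by a different route than the paper. The paper works with the optimal value function $v(b)=\inf_{\rho}\{\cK(\rho,\mu)+\tfrac{\alpha}{2}\norm{C\mathbb{E}_{\rho}[{X}]-b}_2^2\}$, computes its conjugate $v^*$, observes that $v^*$ is $\tfrac{1}{\alpha}$-strongly convex, and invokes the abstract duality between strong convexity of $v^*$ and global $\alpha$-Lipschitz continuity of $\nabla v$ (Lemmas \ref{lem:adj2}--\ref{lem:Derivative} and Remark \ref{rem:alphalip}); since $\nabla v(b)=\alpha(b-C\mathbb{E}_{\bar\rho}[{X}])$, the reverse triangle inequality and the singular-value bound finish the proof. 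You instead stay entirely in the finite-dimensional dual \eqref{eq:Dual}: the first-order condition gives $C\mathbb{E}_{\rho_i}[{X}]=b_i-\tfrac{1}{\alpha}\bar\lambda_i$, and monotonicity of $\nabla\Psi$ plus Cauchy--Schwarz yields $\tfrac{1}{\alpha}\norm{\bar\lambda_1-\bar\lambda_2}_2\le\norm{b_1-b_2}_2$. These are in fact the same inequality in disguise --- your optimality condition shows $\bar\lambda_i=\nabla v(b_i)$, so your bound on the multipliers is exactly the paper's Lipschitz bound on $\nabla v$ --- but you obtain it by an elementary monotonicity argument rather than by computing $v^*$ through the infinite-dimensional conjugate calculus. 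Your version is more self-contained once Theorem \ref{thm-maindual} is in hand (it needs only differentiability of the log-moment-generating function, justified by compactness of $\Omega$, and convexity of $\Psi$); the paper's value-function approach is more abstract but transfers verbatim to the generalization mentioned after the theorem, where the expectation is replaced by an arbitrary weak$^*$ continuous linear operator, since it never uses the explicit form of the dual objective. Your reading of the fidelity term as $\tfrac{\alpha}{2}\norm{\cdot}_2^2$ (correcting the typo in the statement) and your use of uniqueness of the primal minimizer to identify $\rho_i$ with the measure recovered from $\bar\lambda_i$ are both sound.
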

We remark that the identical result holds true (with minor modifications to the proof) if the expectation is replaced with {\it any linear operator} $\cP{\Omega} \to \R^d$. 

The proof will follow from a sequence of lemmas. To this end we consider the optimal value function for \eqref{eq:Primal}, which we denote $v:\R^d\to\R$, as
\begin{equation}
  v(b) \,:= \, \inf_{\rho\in\cP{\Omega}}\left\{  \mathcal{K}(\rho,\mu) +\frac{\alpha}{2}\norm{C\mathbb{E}_{\rho}[{X}]-b}_2^2\right\}=\inf_{\rho\in \cP{\Omega}}\left\{k(\rho,b)+ h\circ L(\rho,b)\right\},
  \label{eq:OptVal}
\end{equation}
where 
\begin{equation}
  k:(\rho,b)\in\cM{\Omega}\times\R^d\mapsto\cK(\rho,\mu),\quad h=\frac{\alpha}{2}\norm{\cdot}_2^2,\quad L(\rho,b)=C\mathbb{E}_{\rho}[{X}]-b.  
  \label{eq:FG}
\end{equation}

The following results will allow us to conclude that $\nabla v$ is (globally) $\alpha$-Lipschitz. 

\begin{lemma}
  The operator $L$ in \eqref{eq:FG} is linear and continuous in the product topology, its adjoint is the map $z\mapsto (\ip{C^Tz}{\cdot},-z)\in C(\Omega)\times\R^d$. 
  \label{lem:adj2}
\end{lemma}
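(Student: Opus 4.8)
The plan is to prove Lemma~\ref{lem:adj2} by direct verification, closely mirroring the argument already used for the operator $A$ in Lemma~\ref{lem:adjoint}. First I would observe that $L$ in \eqref{eq:FG} decomposes as $L(\rho,b) = C\mathbb{E}_{\rho}[X] - b$, which is the difference of two maps: $(\rho,b)\mapsto C\mathbb{E}_{\rho}[X]$, which factors through the weak$^*$-continuous map $\mathbb{E}_{(\cdot)}[X]:\cM{\Omega}\to\R^d$ (shown continuous in Lemma~\ref{lem:adjoint}) post-composed with the continuous linear map $C$, and $(\rho,b)\mapsto -b$, which is a continuous linear projection $\cM{\Omega}\times\R^d\to\R^d$. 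Hence $L$ is linear and continuous with respect to the product of the weak$^*$ topology on $\cM{\Omega}$ and the usual topology on $\R^d$. Linearity is immediate since each summand is linear and the duality pairing defining $\mathbb{E}_{\rho}[X]$ is bilinear, exactly as in \eqref{eq:proj}.

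For the adjoint, I would recall from Remark~\ref{rmk:Embed} that the predual of $(\cM{\Omega},w^*)$ is $C(\Omega)$, so the predual of the product $\cM{\Omega}\times\R^d$ is $C(\Omega)\times\R^d$ with pairing $((\phi,w),(\rho,b)) = (\phi,\rho) + \ip{w}{b} = \int_\Omega \phi\,\dx{\rho} + \ip{w}{b}$. Then for any $z\in\R^d$ and $(\rho,b)\in\cM{\Omega}\times\R^d$, I compute
\[
  \ip{L(\rho,b)}{z} = \ip{C\mathbb{E}_{\rho}[X]}{z} - \ip{b}{z} = (\ip{C^Tz}{\cdot},\rho) + \ip{-z}{b},
\]
where the second equality uses the adjoint computation already carried out in Lemma~\ref{lem:adjoint}, namely $\ip{C\mathbb{E}_{\rho}[X]}{z} = (\ip{C^Tz}{\cdot},\rho)$. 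Matching this against the pairing $((\phi,w),(\rho,b))$ identifies $L^*(z) = (\ip{C^Tz}{\cdot},\, -z) \in C(\Omega)\times\R^d$, as claimed.

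There is no serious obstacle here; the lemma is essentially a bookkeeping exercise assembling facts already established. The one point requiring a little care is the correct identification of the predual of the product space and keeping the sign in the $-b$ term straight when reading off the adjoint — this is why the second component of $L^*(z)$ is $-z$ rather than $z$. Everything else follows from Lemma~\ref{lem:adjoint} and the elementary fact that adjoints, continuity, and linearity all respect finite direct sums.
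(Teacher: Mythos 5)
Your proof is correct and follows essentially the same route as the paper: both arguments reduce linearity and weak$^*$ continuity of $L$ to the corresponding facts about the expectation operator established in Lemma~\ref{lem:adjoint}, and both read off the adjoint from the pairing identity $\ip{C\mathbb{E}_{\rho}[X]-b}{z}=(\ip{C^Tz}{\cdot},\rho)+\ip{-z}{b}$. Your version is merely more explicit about identifying the predual of the product space, which the paper leaves implicit.
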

\begin{proof}
  Linearity and continuity of this operator from the linearity and weak$^*$ continuiy of the expectation operator (cf. Lemma \ref{lem:adjoint}).
  %As it pertains to boundedness, observe that 
  %\begin{align*}
  %  \norm{C\mathbb{E}_{\rho}[\bm{X}]-b}_2&\leq\norm{C\mathbb{E}_{\rho}[\bm{X}]}_2+\norm{b}_2\\&\leq\sigma_{\max}(C)\norm{\mathbb{E}_{\rho}[\bm{X}]}_2+\norm{b}_2\\&\leq \kappa_1\left(\norm{\rho}_{TV}+\norm{b}_2  \right),
  %\end{align*}
  %where $\kappa_1=\max\left\{ 1,\sigma_{\max}(C)\sqrt{\kappa} \right\}$ with $\kappa$ defined as in \hl{need to define Kappa again}.
The adjoint is obtained as in Lemma \ref{lem:adjoint},
  \[
    \ip{C\mathbb{E}_{\rho}[{X}]-b}{z}=(\ip{C^Tz}{\cdot},\rho)+\ip{b}{-z}.
  \]
\end{proof}
Next, we compute the conjugate of $k+h\circ L$.
\begin{lemma}
  The conjugate of $k+h\circ L$ defined in \eqref{eq:FG} is the function
\begin{equation}
  (\phi,y)\in C(\Omega)\times\R^d\mapsto(\cK(\cdot,\mu))^*(\phi+\ip{C^Ty}{\cdot})+\frac{1}{2\alpha}\norm{y}_2^2,
    \label{eq:FGconj}  
  \end{equation}
where $(\cK(\cdot,\mu))^*$ is the conjugate computed in Lemma \ref{lem:KLconj}. 
\end{lemma}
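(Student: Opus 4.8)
The plan is to compute $(k+h\circ L)^*$ directly from the definition of the Fenchel conjugate on $X=\cM{\Omega}\times\R^d$, whose dual (in the relevant topologies, cf. Remark \ref{rmk:Embed}) is $C(\Omega)\times\R^d$ with pairing $((\phi,y),(\rho,b))\mapsto(\phi,\rho)+\ip{b}{y}$, where $(\phi,\rho):=\int_\Omega\phi\,\dx{\rho}$. The crucial observation is that $k$ does not depend on $b$, while $h\circ L$ depends on $(\rho,b)$ only through the vector $w:=C\mathbb{E}_{\rho}[{X}]-b$, so the optimization over the product should decouple once we change variables. Concretely, I would write
\[
(k+h\circ L)^*(\phi,y)=\sup_{\rho\in\cM{\Omega},\,b\in\R^d}\left\{(\phi,\rho)+\ip{b}{y}-\cK(\rho,\mu)-\frac{\alpha}{2}\norm{C\mathbb{E}_{\rho}[{X}]-b}_2^2\right\},
\]
and first reduce the supremum over $\cM{\Omega}$ to one over $\cP{\Omega}$, since $\cK(\cdot,\mu)\equiv+\infty$ off $\cP{\Omega}$ (as in the proof of Lemma \ref{lem:KLconj}); this also guarantees that $\mathbb{E}_{\rho}[{X}]$ is finite by compactness of $\Omega$.

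The key step is the substitution $b=C\mathbb{E}_{\rho}[{X}]-w$, which for each fixed $\rho$ is an affine bijection of $\R^d$ onto itself. Using $\ip{b}{y}=\ip{C\mathbb{E}_{\rho}[{X}]}{y}-\ip{w}{y}$ together with the adjoint identity $\ip{C\mathbb{E}_{\rho}[{X}]}{y}=(\ip{C^Ty}{\cdot},\rho)$ from Lemma \ref{lem:adjoint}, the objective becomes a sum of a term depending only on $\rho$ and a term depending only on $w$, so the supremum factors:
\[
(k+h\circ L)^*(\phi,y)=\sup_{\rho\in\cP{\Omega}}\left\{(\phi+\ip{C^Ty}{\cdot},\rho)-\cK(\rho,\mu)\right\}+\sup_{w\in\R^d}\left\{-\ip{w}{y}-\frac{\alpha}{2}\norm{w}_2^2\right\}.
\]
No $\infty-\infty$ issue arises, since $\cK\ge 0$ and the remaining terms are finite for each admissible $\rho$.

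It then remains to identify the two pieces. The first supremum is, by definition, $(\cK(\cdot,\mu))^*$ evaluated at $\phi+\ip{C^Ty}{\cdot}\in C(\Omega)$, i.e.\ the conjugate computed in Lemma \ref{lem:KLconj}; the second is $\left(\tfrac{\alpha}{2}\norm{\cdot}_2^2\right)^*(-y)=\tfrac{1}{2\alpha}\norm{y}_2^2$, an elementary computation (self-conjugacy of $\tfrac12\norm{\cdot}_2^2$ and scaling, cf.\ Lemma \ref{lem:normconj}), the sign being immaterial as the norm is even. Combining the two yields the asserted formula.

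I do not expect a genuine obstacle: the argument is a change of variables plus separation of a supremum over a product. The only point that needs to be stated carefully is the legitimacy of that change of variables — for each fixed $\rho\in\cP{\Omega}$ the maps $b\mapsto w$ and $w\mapsto b$ are mutually inverse bijections of $\R^d$, so the supremum over $b$ and the supremum over $w$ range over the same set, and after substitution the integrand is literally a sum of a function of $\rho$ alone and a function of $w$ alone, which is what licenses passing to the product of the two separate suprema.
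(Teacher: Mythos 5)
Your argument is correct, and it is a genuinely different route from the paper's. The paper invokes the infimal-convolution formula for the conjugate of a sum with a composition, $(k+h\circ L)^*(\phi,y)=\min_{z}\left\{k^*((\phi,y)-L^*(z))+h^*(z)\right\}$ (citing \cite[Thm.~2.8.3]{Zalinescu02}, after checking the qualification condition that $h$ is continuous at some point of $L\dom k$), computes $k^*(\phi,y)=(\cK(\cdot,\mu))^*(\phi)+\delta_{\{0\}}(y)$, and lets the indicator force $z=-y$ in the minimization. You instead compute the conjugate directly from its definition via the change of variables $b=C\mathbb{E}_{\rho}[X]-w$, which splits the objective additively in $\rho$ and $w$ and lets the supremum factor. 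Your approach is more elementary and self-contained: it needs no constraint qualification, no attainment statement for the infimal convolution, and only the adjoint identity from Lemma~\ref{lem:adjoint} plus the conjugates already computed in Lemmas~\ref{lem:KLconj} and~\ref{lem:normconj}. What the paper's route buys is uniformity: the same conjugate calculus is reused verbatim elsewhere (e.g.\ in the subdifferential computation of Lemma~\ref{lem:Derivative}, which rests on the same sum/chain rules), so establishing the qualification condition once pays for itself. Your handling of the only delicate points --- the legitimacy of the fiberwise bijection $b\leftrightarrow w$ and the absence of an $\infty-\infty$ ambiguity when separating the two suprema (the first supremum is finite by Lemma~\ref{lem:KLconj}, the second is finite outright) --- is sound, so nothing is missing.
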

\begin{proof}
  Since $\dom h=\R^d$, $h$ is continuous and $k$ is proper, there exists $x\in L\dom k\cap\dom h$ such that $h$ is continuous at $x$. The previous condition guarantees that,  \cite[Thm. 2.8.3]{Zalinescu02} 
  \begin{equation}
    (k+h\circ L)^*(\phi,y)=\min_{z\in\R^d}\left\{ k^*( (\phi,y)-L^*(z))+h^*(z) \right\}.
    \label{eq:khl}
  \end{equation}
  The conjugate of $k$ is given by
  \[
    k^*(\phi,y)=\sup_{\substack{\rho\in\cM{\Omega}\\ b\in\R^d}}\left\{ (\phi,\rho)+\ip{y}{b}-\cK(\rho,\mu) \right\}.
  \]
  For $y\neq 0$, $\sup_{\R^d}\ip{y}{\cdot}=+\infty$. Thus,
  \[
    k^*(\phi,y)=\sup_{\rho\in\cM{\Omega}}\left\{(\phi,\rho)-\cK(\rho,\mu)  \right\}+\delta_{\{0\}}(y)=(\cK(\cdot,\rho) )^*(\phi)+\delta_{\{0\}}(y).
  \]
The conjugate of $h$ was established in Lemma \ref{lem:normconj} and the adjoint of $L$ is given in Lemma \ref{lem:adj2}. Substituting these expressions into \eqref{eq:khl} yields,
    \begin{align*}
      (k+h\circ L)^*(\phi,y)&=\min_{z\in\R^d}\left\{ (\cK(\cdot,\mu) )^*( (\phi-\ip{C^Tz}{\cdot})+\delta_{\{0 \}}(y+z)+\frac{\alpha}{2}\norm{z}_2^2 \right\}\\&=(\cK(\cdot,\mu) )^*(\phi+\ip{C^Ty}{\cdot})+\frac{1}{2\alpha}\norm{y}_2^2.
    \end{align*}
  %In light of \cite[Thm. 2.8.3]{Zalinescu02},
  %\[
  %(\cK(\cdot,\mu)+h\circ L)^*(\phi,y)=\min_{z\in\R^d}\left\{ \cK^*\left( (\phi,y)-L^*(z)\right)+h^*(z)) \right\}.
  %\]
  %Since $\cK$ is independent of $b$, $\cK^*:C_0(\Omega)\times\R^d\to\R$ is given by
  %\[
  %  \sup_{\substack{\rho,\mu\in\cP{\Omega}\\b\in\R^d}}\left\{ (\phi,\rho)+\ip{y}{b}-\cK(\rho,\mu) \right\}=\cK^*(\phi)+\delta(y|\{0\}),
  %\]
  %where $\cK^*(\phi)$ is the convex conjugate computed in \ref{lem:KLconj}, which is independent of $b$.

  %Thus, the conjugate of $\cK+h\circ L$ is 
  %\[
  %  \min_{z\in\R^d}\left\{ \cK^*(\phi-\ip{C^Tz}{\cdot})+\delta(y+z|\{0\})+\frac{1}{2\alpha}\norm{z}_2^2 \right\}=\cK^*(\phi+\ip{C^Ty}{\cdot})+\frac{1}{2\alpha}\norm{y}_2^2.
  %\]
\end{proof}
The conjugate computed in the previous lemma can be used to establish that of the optimal value function. 
\begin{lemma}
  The conjugate of $v$ in \eqref{eq:OptVal} is
  $v^*:y\in\R^d\mapsto(\cK(\cdot,\mu))^*(\ip{C^Ty}{\cdot})+\frac{1}{2\alpha}\norm{y}_2^2$ which is $\frac{1}{\alpha}$-strongly convex.
  \label{lem:strongconv}
\end{lemma}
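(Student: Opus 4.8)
The plan is to recognize $v$ as a marginal (partial infimum) function and to compute its conjugate from that of the joint functional, which was determined in the preceding lemma. Set $F:=k+h\circ L:\cM{\Omega}\times\R^d\to\bar{\R}$ for $k,h,L$ as in \eqref{eq:FG}. By \eqref{eq:OptVal} and Remark \ref{rmk:Embed} (the domain of $k(\cdot,b)$ lies in $\cP{\Omega}$, so the infimum may equivalently be taken over $\cM{\Omega}$) one has $v(b)=\inf_{\rho\in\cM{\Omega}}F(\rho,b)$. First I would observe that $F$ is proper: it is nonnegative because $\cK(\cdot,\mu)\geq 0$ and $h\geq 0$, and $F(\mu,b)=\tfrac{\alpha}{2}\norm{C\mathbb{E}_{\mu}[{X}]-b}_2^2<+\infty$ for every $b$, so $v$ is indeed real-valued as claimed in \eqref{eq:OptVal}.

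Next, recalling from Remark \ref{rmk:Embed} that $(\cM{\Omega},w^*)^*$ is identified with $C(\Omega)$ — so that $F^*$ is a function on $C(\Omega)\times\R^d$ — I would apply the elementary marginal-function identity
\[
  v^*(y)=\sup_{b\in\R^d}\left\{\ip{y}{b}-\inf_{\rho\in\cM{\Omega}}F(\rho,b)\right\}=\sup_{\rho\in\cM{\Omega},\,b\in\R^d}\left\{\ip{y}{b}-F(\rho,b)\right\}=F^*(0,y),
\]
where $0$ denotes the zero function of $C(\Omega)$; this requires no convexity or semicontinuity, merely that $v$ be well defined. Substituting the expression for $F^*$ from the previous lemma and evaluating at first argument $\phi=0$ gives immediately
\[
  v^*(y)=(\cK(\cdot,\mu))^*(\ip{C^Ty}{\cdot})+\tfrac{1}{2\alpha}\norm{y}_2^2.
\]

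For the strong convexity, I would note that $y\mapsto\ip{C^Ty}{\cdot}$ is linear from $\R^d$ into $C(\Omega)$ and that $(\cK(\cdot,\mu))^*=f^*$ is convex and finite-valued on all of $C(\Omega)$ by Lemma \ref{lem:KLconj}; hence $y\mapsto(\cK(\cdot,\mu))^*(\ip{C^Ty}{\cdot})$ is a finite-valued convex function on $\R^d$. Since $\tfrac{1}{2\alpha}\norm{\cdot}_2^2$ is $\tfrac{1}{\alpha}$-strongly convex, and the sum of a convex function with a $c$-strongly convex function is again $c$-strongly convex (directly from the defining inequality in the Preliminaries), $v^*$ is $\tfrac{1}{\alpha}$-strongly convex. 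I do not anticipate a genuine obstacle: the only point deserving a moment's care is the interchange of the nested supremum and infimum in the identity $v^*=F^*(0,\cdot)$, together with the (already established) identification of the dual of $(\cM{\Omega},w^*)$ with $C(\Omega)$ that makes plugging $\phi=0$ into the previous lemma legitimate.
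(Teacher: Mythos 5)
Your proposal is correct and follows essentially the same route as the paper: both compute $v^*(y)=(k+h\circ L)^*(0,y)$ via the marginal-function identity, substitute the conjugate from the preceding lemma at $\phi=0$, and conclude $\tfrac{1}{\alpha}$-strong convexity since $v^*$ is a sum of a convex function and $\tfrac{1}{2\alpha}\norm{\cdot}_2^2$. The extra care you take with properness of $F$ and the identification $(\cM{\Omega},w^*)^*\cong C(\Omega)$ is sound but already implicit in the paper's setup.
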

\begin{proof}
We begin by computing the conjugate,
\begin{align*}
  v^*(y)&=\sup_{b\in\R^d}\left\{\ip{y}{b}-\inf_{\rho\in\cM{\Omega}}\left\{ k(\rho,b)+h\circ L(\rho,b) \right\}  \right\}\\&=\sup_{\substack{\rho\in\cM{\Omega}\\b\in\R^d}}\left\{(0,\rho)+\ip{y}{b}- k(\rho,b)-h\circ L(\rho,b) \right\}\\&=(k+h\circ L)^*(0,y).
\end{align*}
In light of \eqref{eq:FGconj}, $v^*(y)=(\cK(\cdot,\mu))^*(\ip{C^Ty}{\cdot})+\frac{1}{2\alpha}\norm{y}_2^2$ which is the sum of a convex function and a $\frac{1}{\alpha}$-strongly convex function and is thus $\frac{1}{\alpha}$-strongly convex. 
%\hl{Moreover, since $\cK(\cdot,\mu)$ is proper, lower semicontinuous and convex (c.f. Lemma} \ref{lem:KLlem}), \hl{$(\cK(\cdot,\mu))^*(\ip{C^Ty}{\cdot})$ and thus $v^*$ are  as well} \cite[Thm. 2.3.3]{Zalinescu02}.
\end{proof}

\begin{remark} 
  Theorem \ref{main-thm} establishes attainment for the problem defining $v$ in \eqref{eq:OptVal}, so $\dom v=\R^d$ and $v$ is proper. Moreover, \cite[Prop. 2.152]{Bonnans00} and \cite[Prop. 2.143]{Bonnans00} establish, respectively, continuity and convexity of $v$. Consequently, $(v^*)^*=v$ \cite[Thm. 2.3.3]{Zalinescu02} and since $v^*$ is $\frac{1}{\alpha}$-strongly convex, $v$ is G{\^a}teaux differentiable with globally $\alpha$-Lipschitz derivative \cite[Rmk. 3.5.3]{Zalinescu02}. 
  \label{rem:alphalip}
\end{remark}

%Before characterizing the derivative of $v$, we establish the following technical lemma.

%\begin{lemma}
%  Let $X$ be a separated locally convex space, then the subdifferential of $f:X\to\bar{R}$ is characterized by $\partial f(\bar{x})=\{x^*\in X^*|(\bar{x},x^*)=f(\bar{x})+f^*(x^*)\}$.
%  \label{lem:vsub}
%\end{lemma}
%\begin{proof}
%  Let $x^*\in\partial f(\bar{x})$, then $(\bar{x},x^*)-f(\bar{x})\geq(x,x^*)-f(x)$ $(x\in X)$, since the conjugate is the supremum over all $x$, $f^*(x^*)=(\bar{x},x^*)-f(\bar{x})$.
%On the other hand, if $x^*$ is such that $(\bar{x},x^*)-f(\bar{x})=f^*(x^*)\geq (x,x^*)-f(x)$ $(x\in X)$, then $x^*\in \partial f(\bar{x})$.
%\end{proof}

We now compute the derivative of $v$.

\begin{lemma}
  The derivative of $v$ is the map
  $b\mapsto\alpha\left(b-C\mathbb{E}_{\bar{\rho}}[{X}]\right)$,
  where $\bar{\rho}$ is the solution of the primal problem \eqref{eq:Primal}, which is given in \eqref{eq:PrimalDual}.
\label{lem:Derivative}
\end{lemma}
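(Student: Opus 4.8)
The plan is to read off $\nabla v(b)$ as the (unique) maximizer of the dual problem \eqref{eq:Dual} and then to identify that maximizer in terms of $\bar{\rho}$ using the Fenchel--Rockafellar optimality conditions.

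I would first record, from Remark~\ref{rem:alphalip}, that $v$ is proper, convex, lower semicontinuous and finite on all of $\R^d$, that $(v^*)^*=v$, and that $v$ is G\^ateaux differentiable everywhere, so $\partial v(b)=\{\nabla v(b)\}$ for every $b$. Combining the formula for $v^*$ from Lemma~\ref{lem:strongconv} with the expression for $(\cK(\cdot,\mu))^*$ from Lemma~\ref{lem:KLconj}, the dual problem \eqref{eq:Dual} is exactly $\sup_{\lambda\in\R^d}\{\ip{b}{\lambda}-v^*(\lambda)\}$, whose value is $(v^*)^*(b)=v(b)$; by Theorem~\ref{thm-maindual} this supremum is attained, and since $v^*$ is $\tfrac1\alpha$-strongly convex the maximizer $\bar{\lambda}$ is unique. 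The equality case of the Fenchel--Young inequality (equivalently, the conjugate subdifferential theorem) gives $\bar{\lambda}\in\partial v(b)$, hence $\nabla v(b)=\bar{\lambda}$.

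It remains to compute $\bar{\lambda}$. Since the hypotheses of Remark~\ref{rem:primdual} hold here (they were verified in the proof of Theorem~\ref{thm-maindual}, and $f^*$ is finite-valued by Lemma~\ref{lem:KLconj}), the primal--dual relations there, applied with $f=\cK(\cdot,\mu)$, $g=\tfrac{\alpha}{2}\norm{b+\cdot}_2^2$ and the operator $\rho\mapsto C\mathbb{E}_{\rho}[{X}]$ in the role of $A$, show that the dual maximizer $\bar{\lambda}$ and the primal solution $\bar{\rho}$ of \eqref{eq:Primal} (the measure \eqref{eq:PrimalDual}) satisfy $\bar{\lambda}\in\partial g\big(-C\mathbb{E}_{\bar{\rho}}[{X}]\big)$. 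As $g$ is differentiable with $\nabla g(z)=\alpha(b+z)$, this forces $\bar{\lambda}=\alpha\big(b-C\mathbb{E}_{\bar{\rho}}[{X}]\big)$, and together with $\nabla v(b)=\bar{\lambda}$ the claim follows.

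The proof is thus a direct assembly of results already proved; the one point requiring care is the sign bookkeeping when matching the concrete problem \eqref{eq:Primal} to the abstract template $f(x)+g(-Ax)$ of Theorem~\ref{thm:FR} — in particular making sure $\nabla g$ is evaluated at $-C\mathbb{E}_{\bar{\rho}}[{X}]$ (so that the $\alpha b$ term survives with a plus sign) rather than at its negative. An alternative route avoids duality altogether: $v(b)=\inf_{\rho}\Phi(\rho,b)$ with $\Phi$ jointly convex and $\nabla_b\Phi(\rho,b)=\alpha(b-C\mathbb{E}_{\rho}[{X}])$, so a convex sensitivity (Danskin-type) result for marginal functions, together with the established G\^ateaux differentiability of $v$, yields the same formula; but this needs essentially the same care with the value-function subdifferential.
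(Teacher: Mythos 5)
Your argument is correct, but it takes a genuinely different route from the paper's. The paper works directly with the marginal function: writing $v(b)=\inf_{\rho}\{k(\rho,b)+h\circ L(\rho,b)\}$ as in \eqref{eq:OptVal}, it applies the value-function subdifferential rule ($s\in\partial v(b)$ iff $(0,s)\in\partial(k+h\circ L)(\bar{\rho},b)$ at a minimizer $\bar{\rho}$) together with the sum/chain rule and the adjoint of $L$ from Lemma~\ref{lem:adj2}, and simply reads off the second component of the resulting subgradient, $s=\alpha(b-C\mathbb{E}_{\bar{\rho}}[{X}])$ --- this is precisely the ``Danskin-type'' alternative you sketch in your closing paragraph. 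You instead route through conjugate duality: you note that \eqref{eq:Dual} is exactly $\sup_{\lambda}\{\ip{b}{\lambda}-v^*(\lambda)\}$ with $v^*$ as in Lemma~\ref{lem:strongconv}, conclude from the biconjugate identity, Fenchel--Young equality and the differentiability of $v$ (Remark~\ref{rem:alphalip}) that $\nabla v(b)$ is the unique dual maximizer $\bar{\lambda}$, and then identify $\bar{\lambda}=\nabla g\bigl(-C\mathbb{E}_{\bar{\rho}}[{X}]\bigr)=\alpha(b-C\mathbb{E}_{\bar{\rho}}[{X}])$ from the optimality conditions of Remark~\ref{rem:primdual}, using uniqueness of the primal solution to identify the recovered measure with $\bar{\rho}$. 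Both proofs are assemblies of facts already established; yours buys the extra insight that the dual maximizer of \eqref{eq:Dual} \emph{is} $\nabla v(b)$, and it avoids any fresh subdifferential calculus on $\cM{\Omega}\times\R^d$, at the cost of leaning on attainment and uniqueness in both the primal and dual problems, whereas the paper's argument is shorter and never needs the dual problem at all. Your caution about signs is warranted and correctly resolved: since \eqref{eq:fg} sets $A=-C\mathbb{E}_{(\cdot)}[{X}]$ while Theorem~\ref{thm:FR} is phrased for $f+g(-A\,\cdot)$, the operator actually fed into the duality theorem is $\rho\mapsto C\mathbb{E}_{\rho}[{X}]$, so $\nabla g$ must be evaluated at $-C\mathbb{E}_{\bar{\rho}}[{X}]$, which is what makes the $+\alpha b$ term survive.
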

\begin{proof}
 By \cite[Thm. 2.6.1]{Zalinescu02} and \cite[Thm. 2.8.3]{Zalinescu02},
  \[
    s\in\partial v(b)\iff (0,s)\in\partial \left(k+h\circ L\right)(\bar{\rho},b)=\partial k(\bar{\rho},b)+L^*(\partial h(L(\bar{\rho},b))),
  \]
  for $\bar{\rho}$ satisfying $v(b)=k(\bar{\rho},b)+h\circ L(\bar{\rho},b)$. Since $k$ is independent of $b$, $\partial k(\bar{\rho},b)=\partial\cK(\bar{\rho},\mu)\times\left\{ 0 \right\}$. By Lemma \ref{lem:adj2}, 
  \[
    L^*(\partial h (L(\bar{\rho},b)))=L^*(\alpha\left( C\mathbb{E}_{\bar{\rho}}[{X}]-b \right) )=\left( \ip{\alpha C^T(C\mathbb{E}_{\bar{\rho}}[{X}]-b)}{\cdot},\alpha(b-C\mathbb{E}_{\bar{\rho}}[{X}]) \right),
  \]
  so $s=\partial v(b)=\alpha(b-C\mathbb{E}_{\bar{\rho}}[X])$.
%  Remark \ref{rem:alphalip} guarantees that $v$ is differentiable on $\R^d$, so in particular $\nabla v(b)$ exists. 
%  Let $y=\nabla v(b)$, by \cite[Thm. 2.4.2 (iii)]{Zalinescu02}, $y\in\R^d$ is the unique vector for which $\ip{b}{y}=v(b)+v^*(y)$. Note that 
%  \begin{align*}
%   v(b)&=\int_{\Omega}\log\left( \frac{\exp\ip{C^T\bar{\lambda}}{\cdot}}{\int_{\Omega}\exp\ip{C^T\bar{\lambda}}{\cdot}\dx{\mu}} \right)\dx{\bar{\rho}}+\frac{\alpha}{2}\norm{C\mathbb{E}_{\bar{\rho}}[{X}]-b}_2^2\\
%   &=\ip{\bar{\lambda}}{C\mathbb{E}_{\bar{\rho}}[{X}]}-\log\left( \int_{\Omega}\exp\ip{C^T\bar{\lambda}}{\cdot}\dx{\mu}\right)+ \frac{\alpha}{2}\norm{C\mathbb{E}_{\bar{\rho}}[{X}]-b}_2^2,
% \end{align*}
%and
% \[
%   v^*(\bar{\lambda})=\log\left( \int_{\Omega}\exp\ip{C^T\bar{\lambda}}{\cdot}\dx{\mu} \right)+\frac{1}{2\alpha}\norm{\bar{\lambda}}_2^2.
%\]
%Thus 
%\[
%  v(b)+v^*(\bar{\lambda})=\ip{\bar{\lambda}}{C\mathbb{E}_{\bar{\rho}}[{X}]}+\frac{\alpha}{2}\norm{C\mathbb{E}_{\bar{\rho}}[{X}]-b}_2^2+\frac{1}{2\alpha}\norm{\bar{\lambda}}_2^2.
%\]
%The first order optimality conditions for \eqref{eq:Dual} imply that $\bar{\lambda}=\alpha(b-C\mathbb{E}_{\bar{\rho}}[{X}])$, so
%\[
%  v(b)+v^*(\bar{\lambda})=\ip{\bar{\lambda}}{b-\frac{1}{\alpha}\bar{\lambda}}+\frac{1}{\alpha}\norm{\lambda}_2^2=\ip{b}{\bar{\lambda}}.
%\]
%Thus, $\bar{\lambda}$ is the unique vector satisfying $v(b)+v^*(\cdot)=\ip{b}{\cdot}$ and thus $\nabla v(b)=\bar{\lambda}=\alpha(b-C\mathbb{E}_{\bar{\rho}}[{X}])$.
\end{proof}
We now prove Theorem \ref{thm-stability}.

\begin{proof}[Proof of Theorem \ref{thm-stability}.]
  By Lemma \ref{lem:strongconv}, $v^*$ is $\frac{1}{\alpha}$-strongly convex, so $\nabla v$ computed in Lemma \ref{lem:Derivative} is globally $\alpha$-Lipschitz (cf. Remark \ref{rem:alphalip}), thus 
  \[
    \norm{\nabla v(b_1)-\nabla v(b_2)}_2\leq \alpha\norm{b_1-b_2}_2
  \]
and
\begin{align*}
\norm{\nabla v(b_1)-\nabla v(b_2)}_{2}&=\alpha\norm{b_1-b_2+C(\mathbb{E}_{\rho_2}[{X}]-\mathbb{E}_{\rho_1}[{X}])}_2
  \\&\geq \alpha\norm{C\left( \mathbb{E}_{\rho_2}[{X}]-\mathbb{E}_{\rho_1}[{X}] \right)}_2-\alpha\norm{b_2-b_1}_2
  \\&\geq\alpha\sigma_{\min}(C)\norm{\mathbb{E}_{\rho_1}[{X}]-\mathbb{E}_{\rho_2}[{X}]}_2-\alpha\norm{b_1-b_2}_2.
\end{align*}
Consequently, $\norm{\mathbb{E}_{\rho_1}[{X}]-\mathbb{E}_{\rho_2}[{X}]}_2\leq\frac{2}{\sigma_{\min}(C)}\norm{b_1-b_2}_{2}$.
\end{proof}

%We conclude this section with an immediate corollary for this estimate, highlighting the stability of this method with respect to perturbations in the blurred image.
%
%\begin{corollary}
%  Let $\rho_1$ and $\rho_2$ be obtained by applying our method to the images $b_1$ and $b_2$ blurred by the convolution kernel $c$ using the same prior measure $\mu$, then 
%\[
%  \norm{\mathbb{E}_{\rho_1}[\bm{X}]-\mathbb{E}_{\rho_2}[\bm{X}]}_2\leq \frac{2}{\sigma_{\min}(C)}\norm{b_1-b_2}_2.
%\]
%\end{corollary}

\section{Numerical Results}\label{sec-results}

 \begin{figure}[!htb]
   \captionsetup[subfigure]{justification=centering}
   \begin{subfigure}{0.4\textwidth}
     \includegraphics[width=0.725\textwidth]{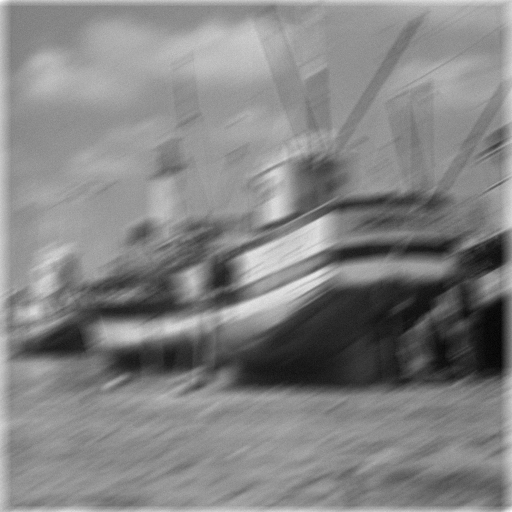} 
     \includegraphics[width=0.23\textwidth]{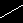}
     \caption{PSNR: 20.83 dB\\\phantom{PSNR}}
   \end{subfigure}
   \begin{subfigure}{0.29\textwidth}
     \centering
     \includegraphics[width=\textwidth]{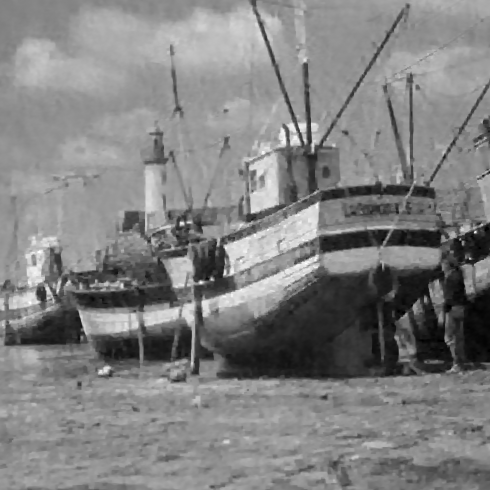}
    \caption{Cho \etal\cite{Cho11},\\ PSNR: 27.50 dB}
   \end{subfigure}
   \begin{subfigure}{0.29\textwidth}
     \centering
     \includegraphics[width=\textwidth]{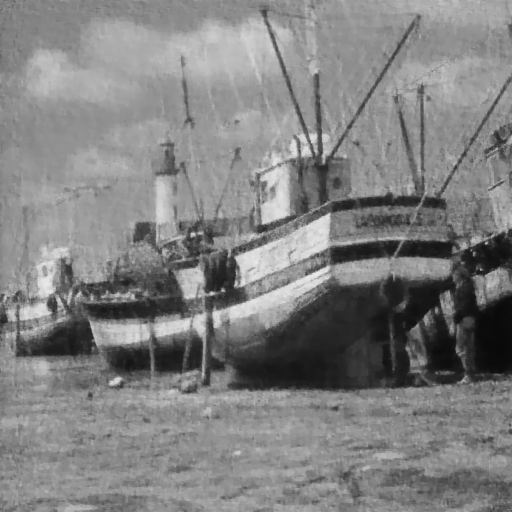}
    \caption{Ours,\\ PSNR: 26.68 dB}
  \end{subfigure}

   \caption{{\bf Deconvolution with noise: } Original image is 512$\times$512 pixels. (a) is the blurred image which is further degraded with $1\%$ Gaussian noise along with the $23$ pixel wide convolution kernel. 
(b)  is the result obtained using Cho {\it et al}'s  deconvolution method \cite{Cho11}.
(c) is the result obtained from the blurred image via our non-blind deblurring method. 
}
\label{fig:NoisyNonBlind}
 \end{figure}

%\begin{figure*}[!htb]
%  \begin{center}
%    \includegraphics[width=0.7\linewidth]{CamVert2}
%  \end{center}
%  \caption{{\bf Blind deblurring with and without noise: } This figure compares the performance of our blind deblurring method with EPLL denoising preprocessing and TV denoising postprocessing to that of Pan \etal's blind deblurring method \cite{Pan17} with an EPLL denoising preprocessing step when deblurring images with varying amounts of noise and different blurring kernels.
%    The subfigures are arranged as follows:
%    {\bf Left:} Blurred and noisy image. {\bf Centre left:} Original convolution kernel on top, our estimated kernel in the bottom left and Pan \etal's estimated kernel in the bottom right. {\bf Centre right:} The latent image obtained by our method. {\bf Right:} The latent image obtained by Pan \etal's method  
%    (a) is noiseless with a $33$ pixel wide kernel. (b) has $1\%$ Gaussian noise with a $27$ pixel wide kernel. (c) has $2\%$ Gaussian noise with a $17$ pixel wide kernel and (d) has $5\%$ Gaussian Noise with a $13$ pixel wide kernel. 
%  Of note is that our method is consistently smoother, the differences are best viewed on a high resolution monitor.}
% \label{fig:BlindCam}
% \end{figure*}

\begin{figure*}[!htb]
  \begin{subfigure}{\textwidth}
   \begin{subfigure}{0.19\textwidth}
     \centering
   \phantom{\{}\\
     PSNR: 13.79 dB\\
   \includegraphics[width=\textwidth]{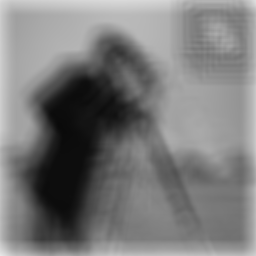}
   \\
\includegraphics[width=0.5\textwidth]{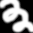}
  \end{subfigure}
   \begin{subfigure}{0.19\textwidth}
     \centering
     Liu \etal \cite{Liu19}\\
     PSNR: 16.39 dB\\
     \includegraphics[width=\textwidth]{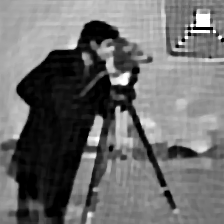}\\
      \includegraphics[width=0.5\textwidth]{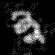}
   \end{subfigure}
  \begin{subfigure}{0.19\textwidth}
    \centering
    Pan \etal \cite{Pan16}\\ 
    PSNR: 16.49 dB\\
    \includegraphics[width=\textwidth]{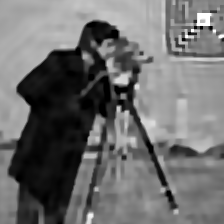}\\
\includegraphics[width=0.5\textwidth]{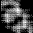}
  \end{subfigure}
    \begin{subfigure}{0.19\textwidth}
      \centering
      Yan \etal \cite{Yan17}\\
      PSNR: 16.54 dB\\
      \includegraphics[width=\textwidth]{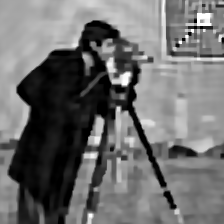} \\
 \includegraphics[width=0.5\textwidth]{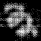}
    \end{subfigure}
   \begin{subfigure}{0.19\textwidth}
     \centering
     \phantom{\{}Ours\phantom{\{}\\
         PSNR: 29.44 dB\\
     \includegraphics[width=\textwidth]{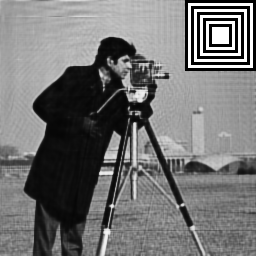}\\
  \includegraphics[width=0.5\textwidth]{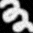}
   \end{subfigure} 
   \caption{}
 \end{subfigure}

  \begin{subfigure}{\textwidth}
   \begin{subfigure}{0.19\textwidth}
     \centering
   %\phantom{\{}\\  
    PSNR: 13.54 dB\\
   \includegraphics[width=\textwidth]{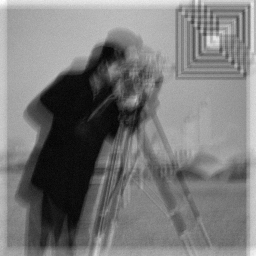}
   \\
\includegraphics[width=0.5\textwidth]{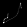}
  \end{subfigure}
   \begin{subfigure}{0.19\textwidth}
     \centering
     PSNR: 25.98 dB\\
    % Liu \etal \cite{Liu19}\\
     \includegraphics[width=\textwidth]{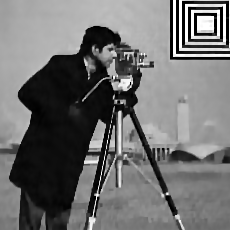}\\
      \includegraphics[width=0.5\textwidth]{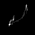}
   \end{subfigure}
  \begin{subfigure}{0.19\textwidth}
    \centering
    PSNR: 23.39 dB\\
    %Pan \etal \cite{Pan16}\\ 
    \includegraphics[width=\textwidth]{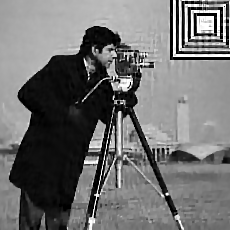}\\
\includegraphics[width=0.5\textwidth]{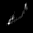}
  \end{subfigure}
    \begin{subfigure}{0.19\textwidth}
      \centering
    PSNR: 23.71 dB\\
     % Yan \etal \cite{Yan17}\\
      \includegraphics[width=\textwidth]{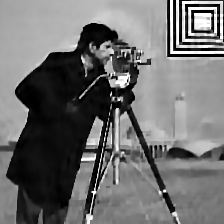} \\
 \includegraphics[width=0.5\textwidth]{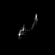}
    \end{subfigure}
   \begin{subfigure}{0.19\textwidth}
     \centering
    PSNR: 27.79 dB\\
     %\phantom{\{}Ours\phantom{\{}\\
     \includegraphics[width=\textwidth]{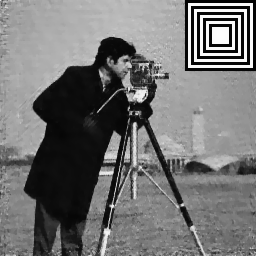}\\
  \includegraphics[width=0.5\textwidth]{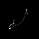}
   \end{subfigure} 
   \caption{}
 \end{subfigure}
 \iffalse 
 \begin{subfigure}{\textwidth}
   \begin{subfigure}{0.19\textwidth}
     \centering
   %\phantom{\{}\\  
   \includegraphics[width=\textwidth]{CamComp/3per/3K1noisy}
   \\
\includegraphics[width=0.5\textwidth]{CamComp/3per/ker01}
  \end{subfigure}
   \begin{subfigure}{0.19\textwidth}
     \centering
    % Liu \etal \cite{Liu19}\\
     \includegraphics[width=\textwidth]{}\\
      \includegraphics[width=0.5\textwidth]{}
   \end{subfigure}
  \begin{subfigure}{0.19\textwidth}
    \centering
   % Pan \etal \cite{Pan16}\\ 
    \includegraphics[width=\textwidth]{}\\
\includegraphics[width=0.5\textwidth]{}
  \end{subfigure}
    \begin{subfigure}{0.19\textwidth}
      \centering
     % Yan \etal \cite{Yan17}\\
      \includegraphics[width=\textwidth]{} \\
 \includegraphics[width=0.5\textwidth]{}
    \end{subfigure}
   \begin{subfigure}{0.19\textwidth}
     \centering
    % \phantom{\{}Ours\phantom{\{}\\
     \includegraphics[width=\textwidth]{CamComp/3per/3K1threshed}\\
  \includegraphics[width=0.5\textwidth]{CamComp/3per/3K1kest}
   \end{subfigure} 
   \caption{}
 \end{subfigure}
 \fi
  \begin{subfigure}{\textwidth}
   \begin{subfigure}{0.19\textwidth}
     \centering
   %\phantom{\{}\\
     PSNR: 15.37 dB\\ 
   \includegraphics[width=\textwidth]{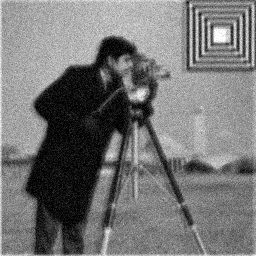}
   \\
\includegraphics[width=0.5\textwidth]{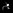}
  \end{subfigure}
   \begin{subfigure}{0.19\textwidth}
     \centering
    % Liu \etal \cite{Liu19}\\
     PSNR: 23.17 dB\\
     \includegraphics[width=\textwidth]{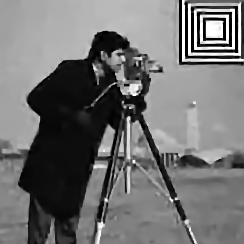}\\
      \includegraphics[width=0.5\textwidth]{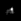}
   \end{subfigure}
  \begin{subfigure}{0.19\textwidth}
    \centering
    PSNR: 21.60 dB\\
    %Pan \etal \cite{Pan16}\\ 
    \includegraphics[width=\textwidth]{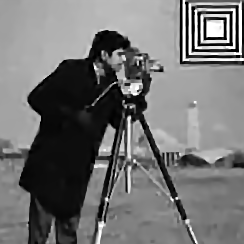}\\
\includegraphics[width=0.5\textwidth]{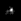}
  \end{subfigure}
    \begin{subfigure}{0.19\textwidth}
      \centering
      PSNR: 20.97 dB\\
    %  Yan \etal \cite{Yan17}\\
      \includegraphics[width=\textwidth]{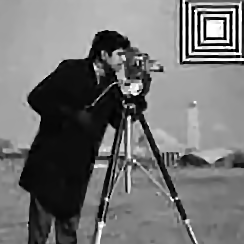} \\
 \includegraphics[width=0.5\textwidth]{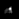}
    \end{subfigure}
   \begin{subfigure}{0.19\textwidth}
     \centering
     PSNR: 25.67 dB\\
   %  \phantom{\{}Ours\phantom{\{}\\
     \includegraphics[width=\textwidth]{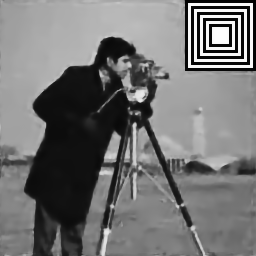}\\
  \includegraphics[width=0.5\textwidth]{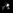}
   \end{subfigure} 
   \caption{}
 \end{subfigure}

 \caption{{\bf Blind deblurring with and without noise: } Original image is 256$\times$256 pixels. The performance, with varying amounts of noise and different blurring kernels, of our blind deblurring method with EPLL denoising preprocessing and TV denoising postprocessing to that of other contemporary methods with the EPLL denoising preprocessing step.   
    The blurred and noisy image is on the left with the original convolution kernel below it. 
%    The latent images and estimated kernels obtained with the different methods are stacked in the same fashion. 
    %{\bf Centre left:} Original convolution kernel on top, our estimated kernel in the bottom left and Pan \etal's estimated kernel in the bottom right. {\bf Centre right:} The latent image obtained by our method. {\bf Right:} The latent image obtained by Pan \etal's method  
    (a) is noiseless with a $33$ pixel wide kernel. (b) has $1\%$ Gaussian noise with a $27$ pixel wide kernel. %(c) has $2\%$ Gaussian noise with a $17$ pixel wide kernel 
    (c) has $5\%$ Gaussian Noise with a $13$ pixel wide kernel. We repeat the {\bf strong caveat} of this comparison: Unlike the other methods, ours directly exploits the known symbology. 
     %Of note is that our method is consistently smoother, the differences are best viewed on a high resolution monitor.
  }
 \label{fig:BlindCam}
 \end{figure*}

%\begin{figure}[!htb]
%  \begin{center}
%    \includegraphics[width=0.7\linewidth]{Entropy}
%  \end{center}
%  \caption{{\bf Blind text deblurring with and without noise: } At the top is the original image. The subfigures are organised as follows: Top: Original convolution kernel on the right and estimated kernel on the left. Middle: Blurred and noisy image. Bottom: Deblurred image obtained using our method with an EPLL denoising preprocessing step and a thresholding postprocessing step. (a) is noiseless with a $57\times 57$ pixel kernel. (b) has $1\%$ Gaussian noise with a $45$ pixel wide kernel.  }
 %\label{fig:BlindText}
% \end{figure}

 \begin{figure}[!htb]
   \begin{subfigure}{0.49\textwidth}
     \centering
    \includegraphics[width=\textwidth]{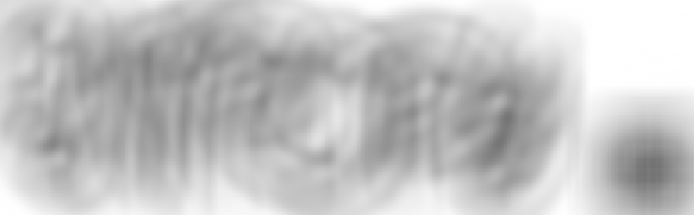}\\
    \includegraphics[width=0.3\textwidth]{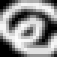}\hspace{3em}
    \includegraphics[width=0.3\textwidth]{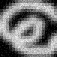}\\
    \includegraphics[width=\textwidth]{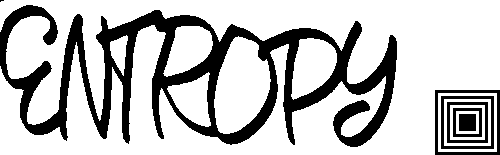}
    \caption{}
   \end{subfigure}
   \begin{subfigure}{0.49\textwidth}
     \centering
    \includegraphics[width=\textwidth]{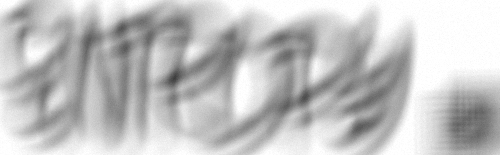}\\
    \includegraphics[width=0.3\textwidth]{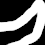}\hspace{3em}
    \includegraphics[width=0.3\textwidth]{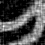}\\
    \includegraphics[width=\textwidth]{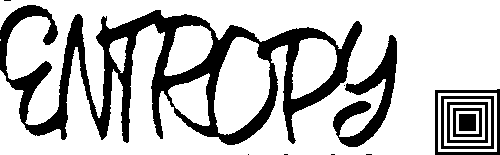}
    \caption{}
   \end{subfigure}
   \caption{{\bf Blind text deblurring with and without noise: }  Original image is 500$\times$155 pixels. Top: Blurred and noisy image. Middle: Original convolution kernel on the left and estimated kernel on the right. Bottom: Deblurred image obtained using our method with an EPLL denoising preprocessing step and a thresholding postprocessing step. (a) is noiseless with a $57$ pixel kernel. (b) has $1\%$ Gaussian noise with a $45$ pixel wide kernel.
}
\label{fig:BlindText}
 \end{figure}

 We present results obtained using our method on certain simulated images.
 We begin with deconvolution, i.e. when the blurring kernel $c$ is known. Figure \ref{fig:NoisyNonBlind} provides an example in which a blurry and noisy image has been deblurred using the non-blind deblurring method. We note that the method does not actively denoise blurred images when a uniform prior is used, so a preprocessing step consisting of expected patch log likelihood (EPLL) denoising \cite{Zoran11} is first performed. For the sake of consistency, the same preprocessing step is applied prior to using Cho {\it et al}'s deconvolution method \cite{Cho11} (this step also improves the quality of the restoration for this method). The resulting image is subsequently deblurred and finally TV denoising \cite{ROF} is used to smooth the image in our case (this step is unnecessary for the other method as it already results in a smooth restoration).  Note that for binary images such as text, TV denoising can be replaced by a thresholding step (see figure \ref{fig:BlindText}).

% Thus, to deblur images which have been degraded by noise, one first deblurs the image using the proposed method and subsequently denoises the output using total variation denoising (see Fig. \ref{fig:NoisyNonBlind}) or simple thresholding if the image is binary {\color{red} (see Fig. fig:BlindNoisyText)}.

%In the case of blind deblurring, the convolution kernel $c$ is not given and must be determined in some fashion before the image estimate can be performed. Analogously, non-blind deblurring consists of deblurring an 
% image with a given kernel. Non-blind deblurring permits more dramatic results as compared to the blind method, as can be seen by comparing Figs. \ref{fig:NonBlind} and \ref{fig:BlindCam}. This is due to the fact that in blind deblurring, the estimated kernel will differ from the actual kernel and this error will compound with the error of the image estimation step, whereas the non-blind method's only source of error is the numerical error incurred by solving the dual problem.

 \begin{figure}[!htb]
   \begin{subfigure}{0.19\textwidth}
     \centering
   \phantom{\{}\\
     PSNR: 14.61 dB\\
   \includegraphics[width=\textwidth]{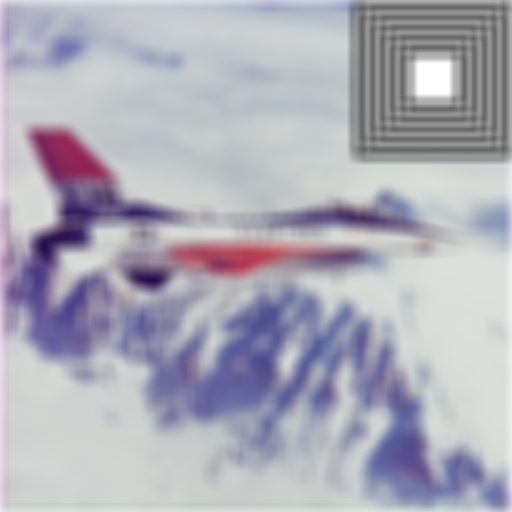}
   \\
\includegraphics[width=0.5\textwidth]{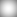}
\caption{}
  \end{subfigure}
   \begin{subfigure}{0.19\textwidth}
     \centering
     Liu \etal \cite{Liu19}\\
     PSNR: 26.87 dB\\
     \includegraphics[width=\textwidth]{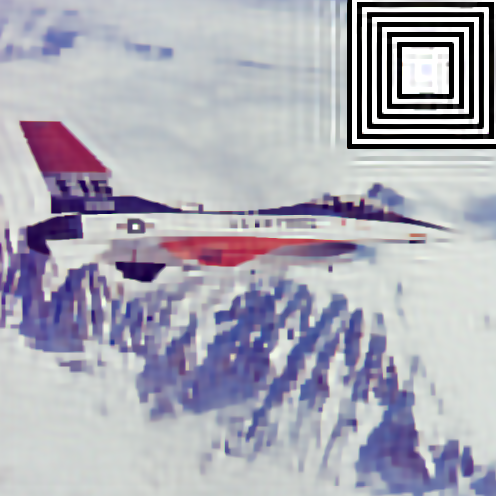}\\
     \includegraphics[width=0.5\textwidth]{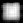}
\caption{}
   \end{subfigure}
  \begin{subfigure}{0.19\textwidth}
    \centering
    Pan \etal \cite{Pan16}\\ 
    PSNR: 23.97 dB\\
    \includegraphics[width=\textwidth]{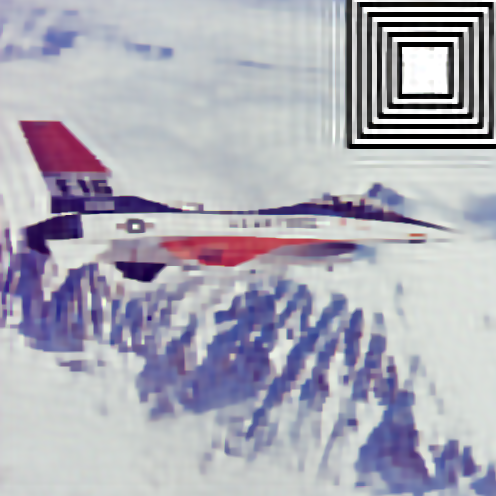}\\
\includegraphics[width=0.5\textwidth]{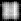}
\caption{}
\end{subfigure}
    \begin{subfigure}{0.19\textwidth}
      \centering
      Yan \etal \cite{Yan17}\\
      PSNR: 24.67 dB\\
      \includegraphics[width=\textwidth]{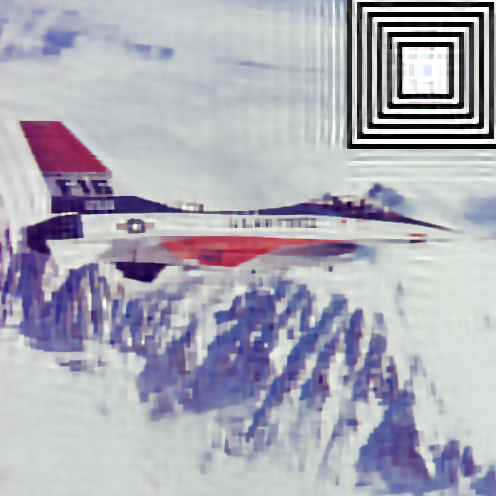} \\
 \includegraphics[width=0.5\textwidth]{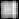}
    \caption{}
    \end{subfigure}
   \begin{subfigure}{0.19\textwidth}
     \centering
     \phantom{\{}Ours\phantom{\{}\\
         PSNR: 39.66 dB\\
     \includegraphics[width=\textwidth]{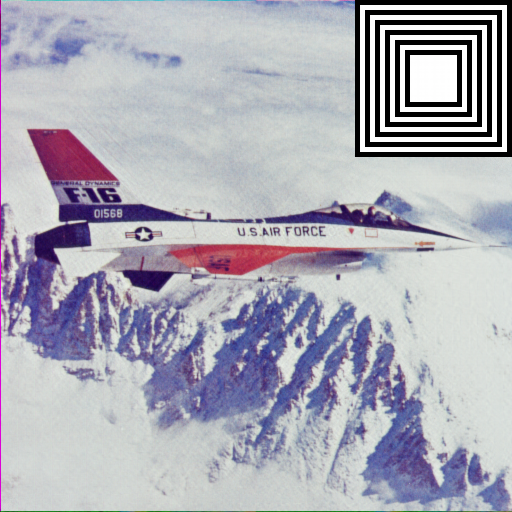}\\
  \includegraphics[width=0.5\textwidth]{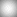}
 \caption{}  
\end{subfigure} 
\caption{{\bf Blind deblurring in color: }  Original image is 512$\times$512 pixels. (a) is the image which has been blurred with a $17\times 17$ kernel (no noise). (b)-(e) are the latent image and estimated kernel obtained with different methods. We repeat the {\bf strong caveat} of this comparison: Unlike the other methods, ours directly exploits the known symbology.}   
  \label{fig:BlindPlane} \end{figure}

  Results for blind deblurring are compiled in Figure \ref{fig:BlindCam}, \ref{fig:BlindText} and \ref{fig:BlindPlane}. In this case $\gamma=10^{5}$ and $\alpha=10^{6}$ provide good results in the noiseless case and $\gamma=10^{3},\alpha=10^{4}$ is adequate for the noisy case, these parameters require manual tuning to yield the best results however. Comparisons are provided with various state of the art methods \cite{Liu19,Pan16,Yan17}. These methods estimate the kernel and subsequently use known deconvolution algorithms to generate the latent image, the same deconvolution method \cite{Cho11} was used for all three methods as it yielded the best results.

We have stressed that the other methods in our comparison do not exploit the presence of a finder pattern; they are fully blind where ours are symbology-based. Hence it would be natural to ask if these methods can also benefit from known symbology. It is not immediate how to 
successfully use these methods with 
symbology. 
Most of these methods  iterate between optimizing for the image and the kernel, and  employ $\ell_0$ regularization which is non-convex and can therefore terminate in a local minimum. Hence iteration could prove problematic. Following what we have done here, one could use  a strictly convex  regularization scheme  to first estimate $c$ using the finder pattern as the sole image and then deconvolute. But this was precisely the approach taken by Gennip  et al in  \cite{Gennip15}  using a strictly convex  regularization scheme of the form (8) to exploit the known finder patterns in QR barcodes. 
Its performance was significantly inferior to our MEM method as presented in \cite{Riouxetal}. 
The ability of MEM to incorporate nonlinear constraints via the introduction of a prior is a definite advantage. 

In  \ref{appendix-D},  we compensate for the bias (in our favour) in the comparisons of our symbology-based blind deblurring method with fully blind methods by 
  presenting a comparison which clearly  gives  the favorable  bias to the other method. 
We consider the same examples as in figures \ref{fig:BlindCam} and \ref{fig:BlindPlane} but compare our 
 symbology-based blind method with the deconvolution (non-blind) method of Cho et al. \cite{Cho11}; that is, we give the comparison method the advantage of knowing the PSF.

\subsection{The Effects of Noise}

In the presence of additive noise, attempting to deblur images using methods that are not tailored for noise is generally ineffective. Indeed, the image acquisition model $b=c*x$ is replaced by $b=c*x+n$ where $n$ denotes the added noise. The noiseless model posits that the captured image should be relatively smooth due to the convolution, whereas the added noise sharpens segments of the image randomly, so the two models are incompatible. However, Figures \ref{fig:NoisyNonBlind} and \ref{fig:BlindCam} show that our method yields good results in both deconvolution and blind deblurring when a denoising preprocessing step (the other methods use the preprocessed version of the image as well for the sake of consistency) and a smoothing postprocessing step are utilized.

 Remarkably, with a uniform prior, the blind deblurring method is more robust to the presence of additive noise in the blurred image than the non-blind method. Indeed, accurate results were obtained with up to $5\%$ Gaussian noise in the blind case whereas in the non-blind case, quality of the recovery diminished past $1\%$ Gaussian noise. This is due to the fact that the preprocessing step fundamentally changes the blurring kernel of the image. We are therefore attempting to deconvolve the image with the wrong kernel, thus leading to aberrations. On the other hand, the estimated kernel for blind deblurring is likely to approximate the kernel modified by the preprocessing step, leading to better results. Moreover, a sparse (Poisson) prior was used in the kernel estimate for the results in Figure \ref{fig:BlindCam} so as to mitigate the effects of noise on the symbology. 

 Finally, we note that there is a tradeoff between the magnitude of blurring and the magnitude of noise. Indeed, large amounts of noise can be dealt with only if the blurring kernel is relatively small and for large blurring kernels, only small amounts of noise can be considered. This is due to the fact that for larger kernels, deviations in kernel estimation affect the convolved image to a greater extent than for small kernels.  

\section{The Role of the Prior, Denoising and Further Extensions}\label{sec-pror}

Our method is based upon the premise that a priori the probability density $\rho$ at each pixel is independent from the other pixels. Hence in our model, the only way to introduce correlations between pixels is via the prior $\mu$. 
Let us first recall the role of the prior $\mu$ in the deconvolution (and $\nu$ in the kernel estimation). 
In deconvolution for general images, the prior $\mu$ was only used to impose box constraints; otherwise, it was unbiased (uniform). 
For deconvolution with symbology, e.g. the presence of a known finder pattern, this information was directly imposed on the prior. For kernel estimation, the prior $\nu$ was used to enforce normalization and positivity of the kernel; but otherwise unbiased. 

Our general method, on the other hand, facilitates the incorporation of far more prior information. 
Indeed,  we seek a prior probability distribution $\mu$
over the space of latent images that possesses at least one of the
following two properties:
\begin{enumerate}
\item $\mu$ has a tractable moment-generating function (so that the dual
problem can be solved via gradient-based methods such as L-BFGS), 
\item It is possible to efficiently sample from $\mu$ (so that the dual
problem can be solved via stochastic optimization methods).
\end{enumerate}

As a simple example, we provide a comparison between a uniform and an exponential prior with large rate parameter ($\beta=400$ at every pixel) to deblur a text image corrupted by $5\%$ Gaussian noise with no preprocessing or postprocessing in figure \ref{fig:Priors}. In the former case, we set the fidelity parameter $\alpha=3\times 10^{4}$ and in the latter, $\alpha=10^{4}$. It is clear from this figure that the noise in the blurred image is better handled by the exponential prior. This fact will be further discussed in Section \ref{sec-reform} which also introduces an efficient implementation of the MEM with an exponential prior. 
In this case, sparsity has been used to promote the presence of a white background by inverting the intensity of the channels during the deblurring process. 
 
\begin{figure}[!htb]
   \captionsetup[subfigure]{justification=centering}
     
   \begin{subfigure}{0.4\textwidth}
     \includegraphics[width=0.725\textwidth]{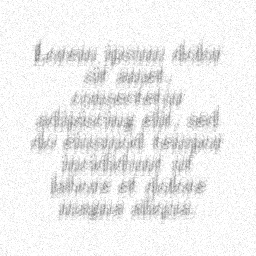} 
    \includegraphics[width=0.23\textwidth]{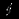}
  \caption{PSNR: 15.69 dB\\\phantom{PSNR}}   \end{subfigure}
   \begin{subfigure}{0.29\textwidth}
     \centering
     \includegraphics[width=\textwidth]{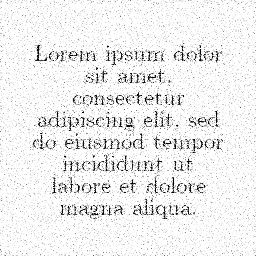} 
     \caption{Uniform,\\PSNR: 19.18 dB}
   \end{subfigure}
   \begin{subfigure}{0.29\textwidth}
     \centering
     \includegraphics[width=\textwidth]{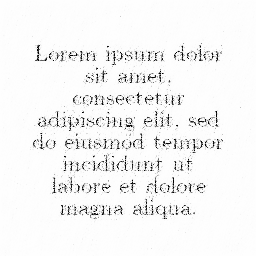}
     
     \caption{Exponential,\\PSNR: 20.73 dB}
   \end{subfigure}
   \caption{{\bf Deconvolution with different priors: }  Original image is 256$\times$256 pixels. (a) is the blurred image with added $5\%$ Gaussian noise along with the $19$ pixel wide convolution kernel.  
(b) is the result obtained using a uniform prior. 
(c) is obtained using an exponential prior. 
}
\label{fig:Priors}
 \end{figure}

More generally, we believe our method could be tailored to contemporary approaches for priors used in machine learning, and this could be one way of blind deblurring without the presence of a finder pattern. 
A natural candidate for such a prior $\mu$ is a {\bf generative adversarial
network (GAN)} (cf. \cite{Good}) trained on a set of instances from a class of natural
images (such as face images). GANs have achieved state-of-the-art
performance in the generative modelling of natural images (cf. \cite{8953766})
and it is possible, by design, to efficiently
sample from the distribution implicitly defined by a GAN's generator.
Consequently, when equipped with a pre-trained GAN prior, our dual
problem (12) would be tractable via stochastic compositional optimization
methods such as the ASC-PG algorithm of Wang et al. in \cite{Wang}.

\subsection{A Return to the Classical Formulation at the Image Level}\label{sec-reform}

It can be advantageous, for example in directly relating the role of the prior to the image regularization, 
 to reformulate our MEM primal problem  \eqref{eq:Primal} at the image  level. 
Recall that previous MEM approaches for inverse problems \eqref{oldMEM-1}, what we called the classical approach, were all based on a primal problem on the space of images. 
Our formulation can also be rephrased at the image level as follows: find $\bar{x}$,  the estimate of the ground truth image, where 
\begin{equation}\label{reform}
 \underset{x\in\R^d}{\arg\min} 
\left\{ v(x)+\frac{\alpha}{2}\norm{Cx-b}_2^2 \right\}\qquad\text{with }\,\, v(x)\,:=\, \inf_{\rho} \left\{ \cK(\rho,\mu) \, \big\vert \, \mathbb{E}_{\rho}[X]=x \right\}.
\end{equation}
In this problem, which essentially appears in
%+++++++++++++++++
\cite{marechal1998principle, Besnerais99},
%+++++++++++++++++  
one can think of $v(x)$ as a {\it regularizer} for the image estimate~$x$.

Given the structure of the above problem as the sum of a potentially, lower semicontinuous convex function and a smooth convex function with $L$-Lipschitz gradient, the Fast Iterative Shrinkage-Thresholding Algorithm (FISTA) \cite{Beck09} can be utilized provided the proximal operator, defined for $t>0$ by 
\[
  \prox_{tv}(u)\, := \,  \underset{x\in\R^d}{\arg\min}  \left( v(x)+\frac{1}{2t}\norm{x-u}_2^2 \right),\]
can be computed efficiently. As before, one can consider a dual formulation to the problem defining the proximal operator (see \eqref{oldMEM-2} for the conjugate of $v$)
\begin{equation}
  \max_{\lambda\in\R^d}\left\{ \ip{u}{\lambda}-\frac{t}{2}\norm{\lambda}_2^2-\log\left(\mathbb{M}_{\mu}[\lambda] \right) \right\},
  \label{eq:dual3}
\end{equation}
such that 
\[
  \prox_{tv}(u)=\left.\nabla_t\log\left( \mathbb{M}_{\mu}[t] \right)\right\vert_{\bar{\lambda}},
\]
for $\bar{\lambda}$ a solution to \eqref{eq:dual3}.
Note that the Lipschitz constant of the derivative of $\frac{\alpha}{2}\norm{Cx-b}_2^2$ (which dictates the step size used in the FISTA iterations) is $\alpha\sigma_{\max}(C)$. Even if the largest singular value of $C$ is 
unknown, one can determine the step size using a line search.

One example for which the proximal operator can be computed efficiently is when the prior consists of independent exponential distributions at each pixel with respective rate parameters $\beta_i>0$. Indeed, \eqref{eq:dual3} reads in this case
\[
  \max_{\lambda\in\R^d}\left\{ \ip{u}{\lambda}-\frac{t}{2}\norm{\lambda}_2^2+\sum_{i=1}^d\log\left(1-\frac{\lambda_i}{\beta_i} \right) \right\},  
\]
whose solution can be written componentwise as 
\[
  (\bar{\lambda}_{\pm})_i=\frac{u_i+\beta_it\pm\sqrt{(u_i+\beta_it)^2-4t(u_i\beta_i-1)}}{2t}=\frac{u_i+\beta_it\pm\sqrt{(u_i-\beta_it)^2+4t}}{2t},
\]
thus one takes the smaller root ($\bar{\lambda}_{-}$) since the $\log$ moment-generating function is well-defined for $\frac{\lambda_i}{\beta_i}<1$, thus
\[
  \left(\prox_{tv}(u)\right)_i=\frac{1}{\beta_i-(\bar{\lambda}_{-})_i}.
\]
As such, one can implement the FISTA algorithm to perform deblurring via the MEM with an exponential prior. This method was used to generate the example in figure \ref{fig:Priors}. 

It is natural to seek a correspondence between regularization at the level of the probability measure using a fixed prior and the regularization at the image level (i.e. the reformulation). In the case of an exponential distribution, one has the following expression for the image space regularization \cite[Table 1]{Besnerais99}
\[
  v(x)=\sum_{i=1}^d x_i\beta_i-1-\log(x_i\beta_i),\quad (x_i>0).
\]
Note that if $x_i\beta_i$ is large, the contribution of that summand is dominated by the linear term. As such, taking an exponential prior with large rate parameter yields results in an image space regularization which approximates $\ell_1$ regularization.  

This subsection is simply meant to highlight this approach: A full study (theory and applications) of the formulation (\ref{reform})  is in progress. 

%+++++++++++++++++++++++++++++
\subsection{Further extension}

We have used throughout the duality pairing between $\mathcal{M}(\Omega)$ and $\mathcal{C}(\Omega)$ with $\Omega \subset \R^d$ compact.
Notice that, since the Kullback-Leibler entropy takes finite values only for measures that
are absolutely continuous with respect to the reference measure, it is also possible to work
with the Radon-Nikodym derivatives, as in~\cite{marechal1998principle}. The primal problem
is then expressed in a space of measurable functions. This setting also facilitates an
interesting extension to the case where $\Omega$ is not bounded. As a matter of fact,
in the latter case, first-order moment integrals such that
$$
(\pi_k,\rho)=\int_\Omega x_k\,\mathrm{d}\rho(x)
$$
are not necessarily well-defined since the coordinate functions $\pi_k:x\mapsto x_k$ may be unbounded
on~$\Omega$. As shown in~\cite{marechal1998principle}, partially finite convex programming
can still be carried out in this setting, offering interesting possible extensions to our analysis.
In essence, in case $\Omega$ is unbounded one must restrict the primal problem to spaces of functions defined by an integrability
condition against a family of constraint functions. Such spaces are sometimes referred to as K\"othe
spaces, and their nature was shown to allow for the application of the convex dual machinery for
entropy optimization~\cite{marechal1998principle}. The corresponding extensions are currently
under consideration, and will give rise to interesting future work.
%+++++++++++++++++++++++++

\section{Conclusion} 
 
The MEM method for the regularization of ill-posed deconvolution problems  garnered much attention in the 80's and 90's with imaging  applications in astrophysics and crystallography.  
However, it is surprising that since that time it has rarely been used for image deblurring (both blind and non blind), and  is not well-known in the image processing and machine learning communities. 
We have shown that a reformulation of the MEM principle produces an 
efficient (comparable with the state of the art) scheme for both deconvolution and kernel estimation for general images. It is also amenable to large blurs which are seldom used for testing methods. 
The scheme reduces to an unconstrained, smooth and strongly convex optimization problem in finite dimensions  for which there exist an abundance of black-box solvers. The strength of this higher-level method lies in its ability to incorporate prior information, often in the form of  nonlinear constraints. 
 
 For kernel estimation (blind deblurring), we focused our attention on exploiting a priori assumed symbology (a finder pattern). While this situation/assumption is indeed restrictive: (i)  there are  scenarios and applications, in addition to synthetic images like barcodes; 
(ii) It is far from clear how standard regularization based methods of the form (\ref{eq:stdapp}) can  be used to  exploit symbology to obtain a similar accuracy of kernel estimation.

 In general, the MEM method is stable with respect to small amounts of noise and this allowed us to successfully deblur noisy data by first pre conditioning with a state of the art denoiser. However, as shown in Section \ref{sec-pror},  the MEM method itself can be used for denoising with a particular choice of prior.

Finally, let us reiterate that in our numerical experiments we use only a modest amount of the potential of MEM to exploit prior information. Future work will concern both kernel estimation without the presence of finder patterns as well as a full study of effects of regularization via the image formulation discussed in Section \ref{sec-reform}.

 \ack{G.R. was partially supported by the NSERC CGS-M program, R.C. and T.H. were partially supported by the NSERC Discovery Grants program. We thank Yakov Vaisbourd for proposing the use of FISTA  in Section \ref{sec-reform}. 
 We would also like to thank the anonymous referees for their many comments which significantly improved the paper.}
% R.C. would like to acknowledge and express his  thanks  to Pierre Mar\'{e}chal, who  suggested in 2014 the MEM approach for deblurring of barcodes (cf. \cite{Riouxetal}). }

\appendix

\section{Implementation Details}
   All figures were generated by implementing the methods in the Python programming language using the Jupyter notebook environment. 
 Images were blurred synthetically using motion blur kernels taken from \cite{Levin09} as well as Gaussian blur kernels to simulate out of focus blur. 
 The relevant convolutions are performed using fast Fourier transforms. 
 Images that are not standard test bank images were generated using the GNU Image Manipulation Program (GIMP), moreover this software was used to add symbolic constraints to images that did not originally incorporate them. 
 All testing was performed on a laptop with an Intel i5-4200U processor.
 The running time of this method depends on a number of factors such as the size of the image being deblurred, whether the image is monochrome or colour, the desired quality of the reproduction desired (controlled by the parameter $\alpha$) as well as  the size of the kernel and whether or not it is given. 
 If a very accurate result is required, these runtimes vary from a few seconds for a small monochrome text image blurred with a small sized kernel to upwards of an hour for a highly blurred colour image.

  \section{The pre and post-processing steps}
  We provide an example of the intermittent images generated in the process of deblurring a noisy image via our method. 
  \begin{figure}[!htb]
    \captionsetup[subfigure]{justification=centering}
    \begin{subfigure}{0.24\textwidth}
      \centering
      \includegraphics[width=\textwidth]{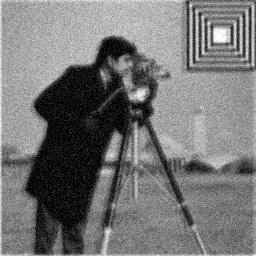}
      \caption{}
    \end{subfigure}
    \begin{subfigure}{0.24\textwidth}
      \centering
      \includegraphics[width=\textwidth]{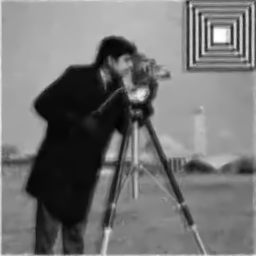}
      \caption{}
    \end{subfigure}
    \begin{subfigure}{0.24\textwidth}
      \centering
      \includegraphics[width=\textwidth]{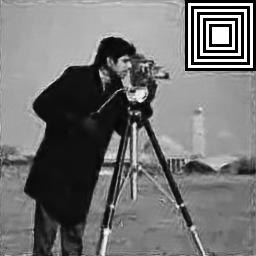}
      \caption{}
    \end{subfigure}
    \begin{subfigure}{0.24\textwidth}
      \centering
      \includegraphics[width=\textwidth]{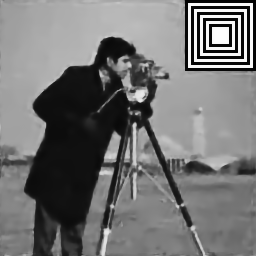}
      \caption{}
    \end{subfigure}
    \caption{(a) is the blurred and noisy image from Figure \ref{fig:BlindCam} (c). 
    (b) is the denoised image obtained via the method of Zoran and Weiss \cite{Zoran11}. (c) is the deblurred image obtained using our method. (d) is a smoothed version of (c) obtained via TV denoising \cite{ROF} using the implementation of Chambolle \cite{Chambolle04}. }
    \label{fig:prepost}
  \end{figure}

  \section{Parameters for Comparisons}
  We compile the parameters used for the kernel estimation step of the deblurring methods to which we compared our method. 

  \begin{figure}[!htb]
  \begin{tabular}{cccc}
    &Liu {\it et al} \cite{Liu19}&Pan {\it et al} \cite{Pan16}& Yan {\it et al} \cite{Yan17}\\
    Fig. \ref{fig:BlindCam} (a)
    %%%%%%%%%%%%%%%%%%%%
    %Noiseless
    %%%%%%%%%%%%%%%%%%%%
    &
    $\begin{array}{|ll|}
    \cline{1-2}
  \text{Kernel size}&55
\\  \shortstack{Kernel prior\\ parameter}&0.8
\\ \cline{1-2}
\end{array}$
&
    $\begin{array}{|ll|}
    \cline{1-2}
  \text{Kernel size}&37
\\\shortstack{Kernel prior\\ parameter}&1
\\ \cline{1-2}
\end{array}$
&
    $\begin{array}{|ll|}
    \cline{1-2}
  \text{Kernel size}&41
\\\shortstack{Kernel prior\\ parameter}&0.8
\\ \cline{1-2}
\end{array}$
\\

    %%%%%%%%%%%%%%%%%%%%
    %1 Percent
    %%%%%%%%%%%%%%%%%%%%

\\
    Fig. \ref{fig:BlindCam} (b)&
    $\begin{array}{|ll|}
    \cline{1-2}
  \text{Kernel size}&29
\\\shortstack{Kernel prior\\ parameter}&0.8
\\ \cline{1-2}
\end{array}$
    &
    $\begin{array}{|ll|}
    \cline{1-2}
  \text{Kernel size}&33
\\\shortstack{Kernel prior\\ parameter}&0.8
\\ \cline{1-2}
\end{array}$
    &
    $\begin{array}{|ll|}
    \cline{1-2}
  \text{Kernel size}&55
\\\shortstack{Kernel prior\\ parameter}&1
\\ \cline{1-2}
\end{array}$
    \\
    %%%%%%%%%%%%%%%%%%%%
    %5 Percent
    %%%%%%%%%%%%%%%%%%%%

    \\
    Fig. \ref{fig:BlindCam} (c)
    &
        $\begin{array}{|ll|}
    \cline{1-2}
  \text{Kernel size}&21
\\\shortstack{Kernel prior\\ parameter}&0.8
\\ \cline{1-2}
\end{array}$
&
    $\begin{array}{|ll|}
    \cline{1-2}
  \text{Kernel size}&27
\\\shortstack{Kernel prior\\ parameter}&0.8
\\ \cline{1-2}
\end{array}$
&
    $\begin{array}{|ll|}
    \cline{1-2}
  \text{Kernel size}&23
\\\shortstack{Kernel prior\\ parameter}&1
\\ \cline{1-2}
\end{array}$
    \\
    %%%%%%%%%%%%%%%%%%%%
    %Plane
    %%%%%%%%%%%%%%%%%%%%
    \\
    Fig. \ref{fig:BlindPlane} &
    $\begin{array}{|ll|}
    \cline{1-2}
  \text{Kernel size}&23
\\\shortstack{Kernel prior\\ parameter}&0.8
\\ \cline{1-2}
\end{array}$
    &
    $\begin{array}{|ll|}
    \cline{1-2}
  \text{Kernel size}&23
\\\shortstack{Kernel prior\\ parameter}&1
\\ \cline{1-2}
\end{array}$
    &

    $\begin{array}{|ll|}
    \cline{1-2}
  \text{Kernel size}&25
\\\shortstack{Kernel prior\\ parameter}&0.8
\\ \cline{1-2}
\end{array}$
  \end{tabular}
  \caption{This table compiles the parameter values used to estimate the kernels for the various methods. For all methods, the $\ell_0$ gradient parameter is set to $4e^{-3}$, the parameters for the surface-aware prior, dark channel prior and the extreme channel prior are all set to $4e^{-3}$ in their respective methods.  }
\end{figure}

Once the kernel has been estimated, the deconvolution method for images with outliers \cite{Cho11} was used to obtain the latent image. For this method, we set the standard deviation for inlier noise to $\frac{5}{255}$ and set the regularization strength for the sparse priors to $0.003$ and decrease it iteratively (with the same kernel) until we obtain a balance between the sharpness of the image and the amount of noise.

 \section{Comparison of our symbology-based Blind Method to a Non-Blind Deblurring Method}\label{appendix-D}
  Here we consider the same examples as in figures \ref{fig:BlindCam} and \ref{fig:BlindPlane} but compare our 
 symbology-based blind method with a state of art method deconvolution method of Cho et al. \cite{Cho11}; that is, we give the comparison method the advantage of knowing the PSF. 
 PSNR values for Cho's method were computed with a cropped version of the latent image to reduce the effects of the boundary conditions for the convolution. The choice of boundary condition accounts for some of our the higher PSNR values.  In images with noise, the non-blind deconvolution method was applied to both the noisy image and the denoised image (via our pre-denoising step), the better result is presented in the  figure D1.

 \begin{figure}[!htb]
    \captionsetup[subfigure]{justification=centering}
    \centering
    \begin{subfigure}{0.20\textwidth}
      \centering
      \includegraphics[width=\textwidth]{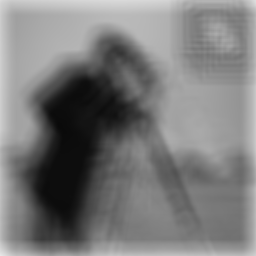} 
      \caption*{Input\\PSNR: 13.79 dB}   
    \end{subfigure}
    \begin{subfigure}{0.20\textwidth}
      \centering
    
      \includegraphics[width=\textwidth]{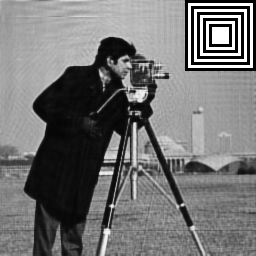}
    \caption*{Ours\\PSNR: 29.44 dB}   
    \end{subfigure}
    \begin{subfigure}{0.20\textwidth}
      \centering 
      \includegraphics[width=\textwidth]{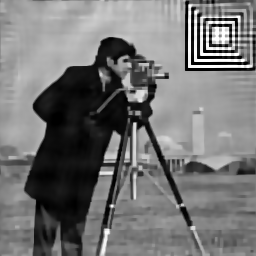}
      \caption*{Cho et al.\\PSNR: 23.73 dB}
    \end{subfigure}
\\
    \begin{subfigure}{0.20\textwidth}
      \centering
      \includegraphics[width=\textwidth]{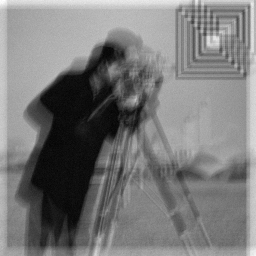}
      \caption*{Input\\PSNR: 13.54 dB}   
    \end{subfigure}
    \begin{subfigure}{0.20\textwidth}
      \centering
    
      \includegraphics[width=\textwidth]{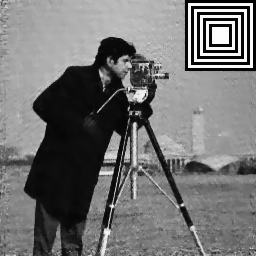}
      \caption*{Ours\\PSNR: 27.79 dB}   
    \end{subfigure}
    \begin{subfigure}{0.20\textwidth}
      \centering 
      \includegraphics[width=\textwidth]{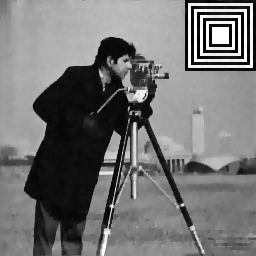}
    \caption*{Cho et al.\\PSNR: 28.50 dB}
    \end{subfigure}
  \\
    \begin{subfigure}{0.20\textwidth}
      \centering
      \includegraphics[width=\textwidth]{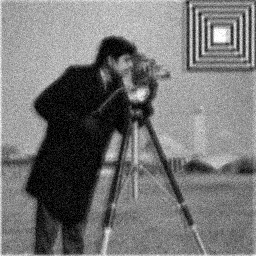}
    \caption*{Input\\PSNR: 15.37 dB}   
    \end{subfigure}
    \begin{subfigure}{0.20\textwidth}
      \centering
    
      \includegraphics[width=\textwidth]{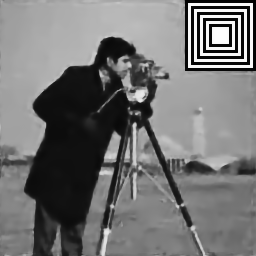}
    \caption*{Ours\\PSNR: 25.67 dB} 
    \end{subfigure}
    \begin{subfigure}{0.20\textwidth}
      \centering
    
      \includegraphics[width=\textwidth]{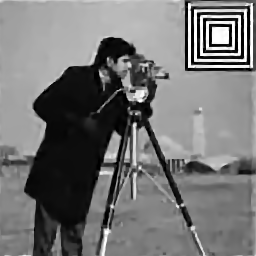}
      \caption*{Cho et al.\\PSNR: 25.12 dB}
    \end{subfigure}
    \\
    \begin{subfigure}{0.20\textwidth}
      \centering
        \includegraphics[width=\textwidth]{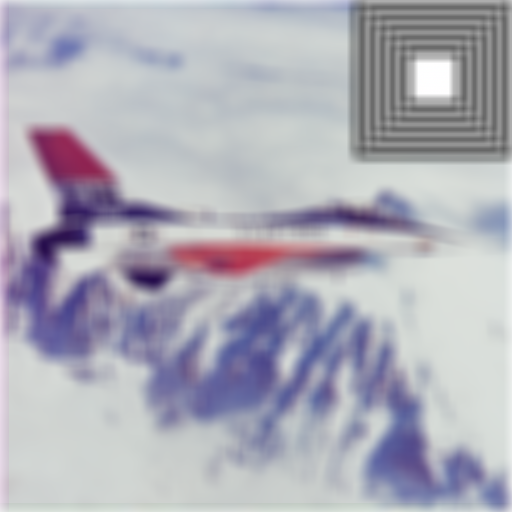}
        \caption*{Input\\PSNR: 14.61 dB} 
      \end{subfigure}
    \begin{subfigure}{0.20\textwidth}
      \centering
    
        \includegraphics[width=\textwidth]{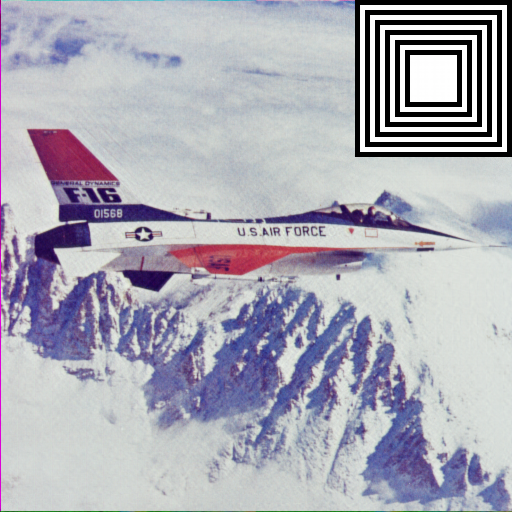}
        \caption*{Ours\\PSNR: 39.66 dB}
      \end{subfigure}
    \begin{subfigure}{0.20\textwidth}
      \centering
    
        \includegraphics[width=\textwidth]{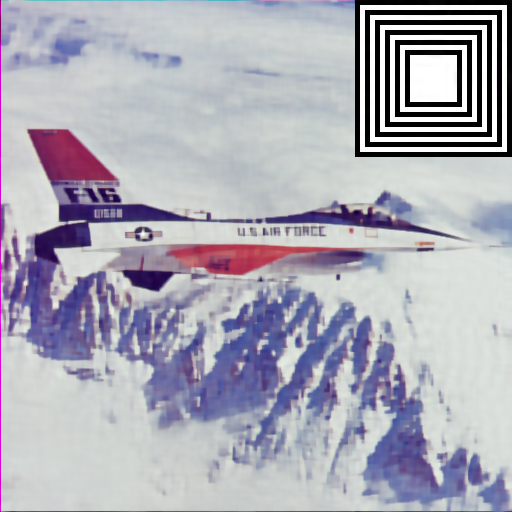}
      \caption*{Cho et al.\\PSNR: 31.52 dB}
    \end{subfigure}
    \label{fig-bias}
    \caption{}
  \end{figure}

 \section*{Bibliography}
\bibliography{MEMM}
\bibliographystyle{plain}

\end{document}